\newcommand{\relu}{\text{ReLU}}
\newcommand{\R}{\mathbb{R}}
\newcommand{\N}{\mathbb{N}}
\newcommand{\CF}{\mathcal{C}(F)}
\newcommand{\sgn}{\text{sgn}}
\newtheorem{theorem}{Theorem}
\newtheorem{lemma}[theorem]{Lemma}
\newtheorem{defn}[theorem]{Definition}
\title{Algorithmic Determination of the Combinatorial Structure of the Linear Regions of ReLU Neural Networks}
\author{%
	Marissa A. Masden\\
	Department of Mathematics\\
	University of Oregon\\
	Eugene, OR 97403\\
	\texttt{mmasden@uoregon.edu} \\
}
\begin{document}

\maketitle

\begin{abstract}

We algorithmically determine the regions and facets of all dimensions of the canonical polyhedral complex, the universal object into which a ReLU network decomposes its input space.  We show that  the locations of the vertices of the canonical polyhedral complex along with their signs with respect to layer maps determine the full facet structure across all dimensions.We present an algorithm which calculates this full combinatorial structure, making use of our theorems that the dual complex to the canonical polyhedral complex is cubical and it possesses a multiplication compatible with its facet structure. The resulting algorithm is numerically stable, polynomial time in the number of intermediate neurons, and obtains accurate information across all dimensions. This permits us to obtain, for example, the true topology of the decision boundaries of networks with low-dimensional inputs.  We run empirics on such networks at initialization, finding that width alone does not increase observed topology, but width in the presence of depth does.  Source code for our algorithms is accessible online at \hyperlink{https://github.com/mmasden/canonicalpoly}{https://github.com/mmasden/canonicalpoly}. 
\end{abstract}

\section{Introduction}

 For fully-connected ReLU networks, the canonical polyhedral complex of the network, as defined in \cite{gloriginal}, encodes its decomposition of input space into linear regions and determines key structures such as the decision boundary. Investigation of properties and characterizations of this decomposition of input space are ongoing, in particular with respect to counting the top-dimensional linear regions \cite{BoundingAC,haninactivation, montufar}, since these bounds give one measure of the expressivity of the associated network architecture. However, understanding of adjacency of regions and more generally the connectivity aspects of lower-dimensional facets are to our knowledge generally undocumented. The connectivity properties across dimensions are necessary to relate combinatorial properties of the polyhedral complex of a network to, for example, the topology of regions into which the decision boundary partitions input space, geometric measurements such as local curvature, or other notions of geometric and topological expressivity, as explored in \cite{guss,Bianchini2014}. 

It is common to describe the top dimensional regions of the input space using "activation patterns" or "neural codes" recorded as vectors in $\{0,1\}^N$ (for example, in \cite{neuralcodes}). Unfortunately,  knowing which activation patterns are present in the top-dimensional regions of a network does not determine their pairwise intersection properties (Theorem \ref{thm:topdimbad}), and computing the intersections of these regions directly is not numerically stable. Inspired by the theory of oriented matroids in hyperplane arrangements \cite{hyperplanes}, we extend the notion of "sign" to include boundary cases, possible in artificial but not biological networks, and work up in dimension from vertices rather than down from regions. 
We show that with full probability, computing the vertices present in the polyhedral complex and recording the signs of the network's activity in intermediate layers determines all face relations in the polyhedral complex. The sign sequence data can be viewed as a labeling scheme which tracks face relations, but theoretically it defines a combinatorial duality isomorphism of the polyhedral complex with a subcomplex of a hypercube (see Figure \ref{fig:canonicalpoly}).  
We provide an algorithm which produces the information necessary to obtain all combinatorial properties of the canonical polyhedral complex, together with its face relations, and as a result, substructures such as the decision boundary. This algorithm is numerically stable with respect to polyhedral intersection, and has polynomial runtime in the number of intermediate neurons at initialization.
The ability to compute the explicit decision boundary of a network provides a new means to evaluate topological expressivity of network functions. 
We  demonstrate the utility of the sign sequence complex by obtaining metrics about topological properties of decision boundaries for fully-connected networks at initialization, which indicates that as width of a network increases, there is more topological complexity as well as variability in topology for deeper networks than for shallow networks. 

In summary, our main contributions are: 
\label{sec:intro}
\begin{itemize}
	\item We prove the existence of a combinatorial description of the geometric dual of the canonical polyhedral complex of a ReLU neural network which consists of a generalization of activation patterns, which we call the \textit{sign sequence complex}. Furthermore, using a product structure which we prove to be well-defined, we show that the only information needed to determine the face poset structure of the sign sequence complex is the sign sequences of the vertices of the polyhedral complex.
	\item We implement a corresponding algorithm for obtaining the sign sequence complex which is numerically stable and runs in polynomial time in the number of intermediate neurons, and exponential time in the input dimension.
	\item  We show that the sign sequence complex can be naturally restricted to particular substructures of the polyhedral complex of a ReLU network. In particular, a chain complex describing the topology of the decision boundary can be obtained with simple operations on a subset of the vertices of the polyhedral complex, together with  their sign sequences. 
	\item We demonstrate the usefulness of this characterization of a network by obtaining the statistics of ReLU networks' decision boundaries' topological properties at initialization, dependent on architecture. These experiments provide empirics that depth of a network plays a stronger role in topological expressivity, at least at initialization, than the number of intermediate neurons.
\end{itemize} 

\section{Related work}

The seminal paper in \cite{gloriginal} establishes a high-level view of the cellular structure of the canonical polyhedral complex, but does not establish explicit lower-dimensional information. Under weak assumptions, they show the canonical polyhedral complex's $(n_0-1)$-skeleton may be described as the preimages of hyperplanes in each layer, but arbitrary $k$-skeletons are unexamined for $k<(n_0-1)$, as are general face relations. The subsequent work \cite{glmsecond} establishes local models for the polyhedral structure at the intersection of hyperplanes, but does not address deeper network structures as we do here. 

In \cite{haninregions} and \cite{haninactivation}, the preimages of hyperplanes which correspond to various dimensional subcomplexes are discussed, but primarily recording the totality of their volumes and counting the top-dimensional regions, respectively, and not obtaining their adjacency relations. In particular, in these works properties of hyperplane arrangements are used to establish \textit{statistical} properties of the canonical polyhedral complex.  While our work does rely on properties of hyperplane arrangements in a similar way, we focus on encoding the \textit{topological} full face poset. In addition, others who approach explicit computation of linear regions as in \cite{Zhang2020Empirical} do so using $H$-representations of polyhedra, and we use $V$-representations. While in theory one could intersect top-dimensional regions pairwise to obtain topological information such as whether two polyhedra share a low-dimensional face, this is not numerically stable, especially when the linear equations involved arise from matrix multiplication. Our "working forward" method affords a priori knowledge of equalities, avoiding issues potentially arising from numerical error in polyhedral intersection. 

Other characterizations of the combinatorics of ReLU networks' polyhedral complexes exist, but lack explicit implementation or applicability to deeper networks. In \cite{powerdiagram}, the regions of the canonical polyhedral complex are described according to the roots of a polynomial, but no algorithm is presented on how to obtain these roots, nor how to explicitly determine whether two polyhedra are connected by a shared face. In \cite{tropical}, a tropical characterization of the polyhedral complex including its face relations relies on the translation of network functions to tropical rational functions with integer coefficients, and its application in \cite{tropicaldb} appears limited to networks with single hidden layers. In contrast, we believe that the vertices present in the sign sequence cubical complex are stable in open sets of parameter space and do not change through the network training except at single thresholds. In \cite{neuralcodes} a characterization of the regions of single-layer hyperplane networks is established which relies on similar sign labelings, but the methods do not apply to deeper networks. Our theoretical work is applicable to networks with additional hidden layers. Furthermore, in contrast to biological papers such as \cite{curto} where boundary structure is not clearly defined, in the context of artificial networks the boundary intersections are in fact computable. 

Focusing on implementation, there is no related work which we know to cite. Our code is the first publicly available which obtains the full cellular poset structure of the polyhedral complex and gives precise topological calculations of the decision boundary.

\section{The sign sequence cubical complex}

We define the sign sequence cubical complex and justify its importance before describing an algorithm for its computation. Detailed definitions and proofs are given in the Appendix, where the reader can find them with the given numbering.

\subsection{Preliminaries}

\label{sec:def}

We work with fully-connected, feedforward ReLU networks (see Definition \ref{def:ReLU} in Appendix \ref{A:Definitions}), following the framework in \cite{gloriginal}. In this framework, if $F$ is expressed as $A_m \circ F_{m-1}\circ ... \circ F_2 \circ F_1$, where each $F_i=\relu \circ A_i$ for an affine map $A_i$, then for the networks under investigation, the last affine map $A_m$ has image in $\mathbb{R}$ and is not followed by the ReLU function. We refer to the activity of each individual hidden unit of the network as the $(i,j)$th \textit{node map}, $F_{ij}$ (Definition \ref{def:nodemap}). 

For fixed $i$, the solutions to $F_{ij}=0$ are hyperplanes in $\mathbb{R}^{n_i}$, which together form a hyperplane arrangement \cite{hyperplanes}. Such are equipped naturally with the structure of a polyhedral complex (Definition \ref{def:polycomplex}). In \cite{gloriginal} the canonical polyhedral complex $\mathcal{C}(F)$ is defined to consist precisely of the cells in $\mathcal{R}^{n_0}$ which are intersections of the preimages of one cell $R_i \subset \mathbb{R}^{n_i}$ under $F_{i-1}\circ .. \circ F_{0}$ for each $i$ (Definition \ref{def:canonicalcomplex}). It is established in \cite{gloriginal} and in \cite{grunert} that each intermediate complex $\mathcal{C}(F_k \circ ... \circ F_1)$ is a polyhedral complex which subdivides the previous one. An immediate result is that $F$ is affine linear on each cell of $\mathcal{C}(F)$. 

We pay particular attention to certain subsets of ReLU neural networks, called \textit{generic} (Definition \ref{def:generic}) and \textit{supertransversal} (Definition \ref{def:supertransversal}) ReLU networks, which satisfy additional conditions. These conditions guarantee that the combinatorial results in the next section hold, but are nonrestrictive in light of the following lemma. 

\newtheorem*{lem:generic}{Lemma \ref{lem:generic}}
\begin{lem:generic}
	Supertransversality is full measure in $\mathbb{R}^P$, where $P$ is the set of network parameters. Additionally, it is \textit{fiberwise generic}, that is, with fixed network weights, the set of biases on which supertransversality is generic is open and full measure. 
\end{lem:generic} 

That almost all networks are, additionally, \textit{generic}, is established in \cite{gloriginal}. This, along with Lemma \ref{lem:generic},  guarantees that in all but a measure-zero subset of networks, the theory developed below will hold.

\subsection{The sign sequence cubical complex}

It is common to use binary strings of length $N$ (which we will denote using -1 and 1) as a labeling scheme to describe which neurons are active at a point in a ReLU network's input space, e.g. in \cite{neuralcodes}. (Here, $N$ is the number of intermediate neurons in the network.) However, this is insufficient to describe the combinatorics of a network's canonical polyhedral complex. 

\newtheorem*{thm:topdimbad}{Theorem \ref{thm:topdimbad}}
\begin{thm:topdimbad}
	There exists a pair of networks $F_1$ and $F_2$ such that the set of strings in $\{-1,1\}^N$ encoding the activation patterns in the interiors of the cells of $\mathcal{C}(F_1)$ and $\mathcal{C}(F_2)$ are equal, but the polyhedral complexes $\mathcal{C}(F_1)$ and $\mathcal{C}(F_2)$ are not combinatorially equivalent, and the decision boundaries of the networks are not homotopy equivalent.
\end{thm:topdimbad}

 We propose instead to use \textit{sign sequences}, defined below, as a means to label all regions and thereby fully encode the combinatorial properties of a network's canonical polyhedral complex.

\newtheorem*{def:signsequence}{Definition \ref{def:signsequence}}
\begin{def:signsequence}
	Define $s: \mathcal{C}(F) \to \{-1,0,1\}^N$ by $s_{ij}(C) = \sgn(F_{ij}(C))$. We call $s(C)$ the \textit{sign sequence} of the cell $C$.
\end{def:signsequence}

Sign sequences are sufficent to list the cells of $\mathcal{C}(F)$. 

\newtheorem*{thm:injective}{Theorem \ref{thm:injective}}
\begin{thm:injective}
	The function $s$ is well-defined and injective.
\end{thm:injective}

Furthermore, the sign sequence of a cell is determinative under conditions of supertransversality and genericity. For example, it encodes the dimension of the cell, in that the number of zeros in the sign sequence of a cell is equal to its codimension. 

\newtheorem*{lem:countzeros}{Lemma \ref{lem:countzeros}}
\begin{lem:countzeros}
	Let $F$ be generic and supertransversal. Let $C$ be a $k$-cell of $\CF$. Then $s(C)$ has exactly $n_0-k$ entries which are zero. 
\end{lem:countzeros}

This leads to a key \textit{geometrically dual} relationship between the canonical polyhedral complex of a supertransversal network and a subcomplex of $[-1,1]^N$, sending $k$-cells in $\mathcal{C}(F)$ to $(n_0-k)$-faces in a cube. We will call this subcomplex $\mathcal{S}(F)$. Recall that a pure (sub)complex is a complex where every facet is contained in some polytope of uniform top dimension.

% DEFINE DUALITY IN APPENDIX
% THEN THIS THEOREM CAN BE SHORTENED

\newtheorem*{thm:cubicalcomplex}{Theorem \ref{thm:cubicalcomplex}}
\begin{thm:cubicalcomplex}
	For each generic, supertransversal neural network $F$, the image of the map $S: \CF \to \{-1,0,1\}^N$ gives a geometric duality between $\mathcal{C}(F)$ and a pure subcomplex $\mathcal{S}(F)$ of the hypercube $[-1,1]^N$ endowed with the product CW structure.  
\end{thm:cubicalcomplex} 

\begin{figure}
	\centering 
	
	\includegraphics[width=0.4\linewidth]{"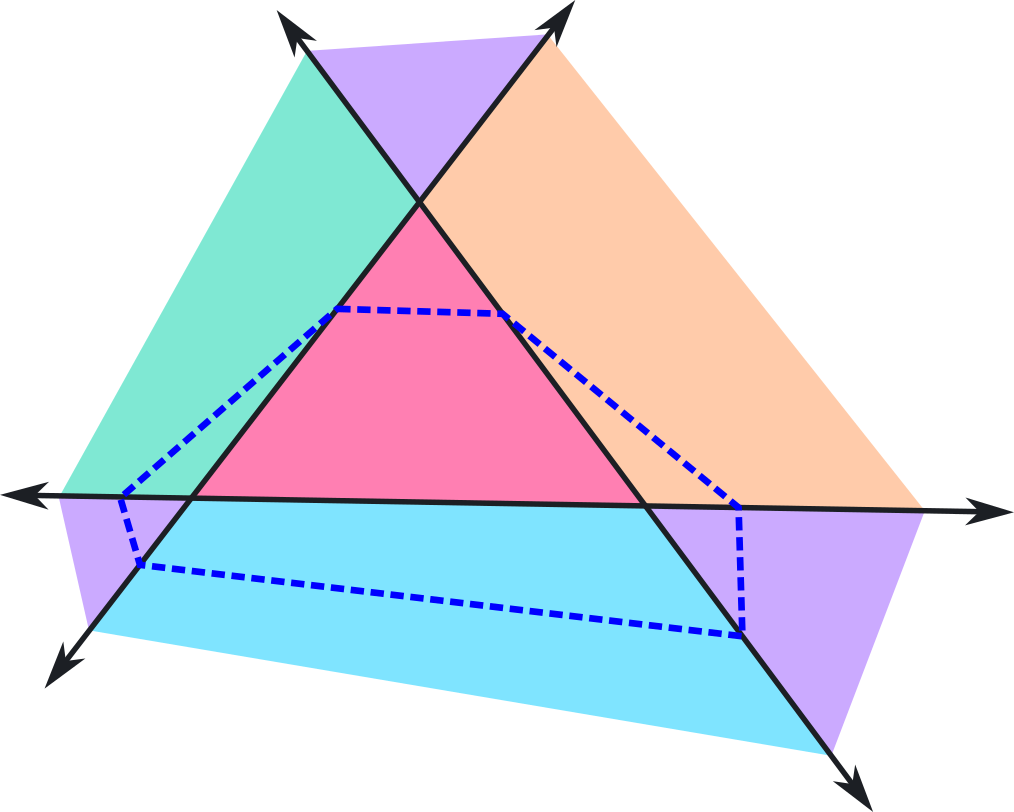"} \hspace{0.1\linewidth}\includegraphics[width=0.4\linewidth]{"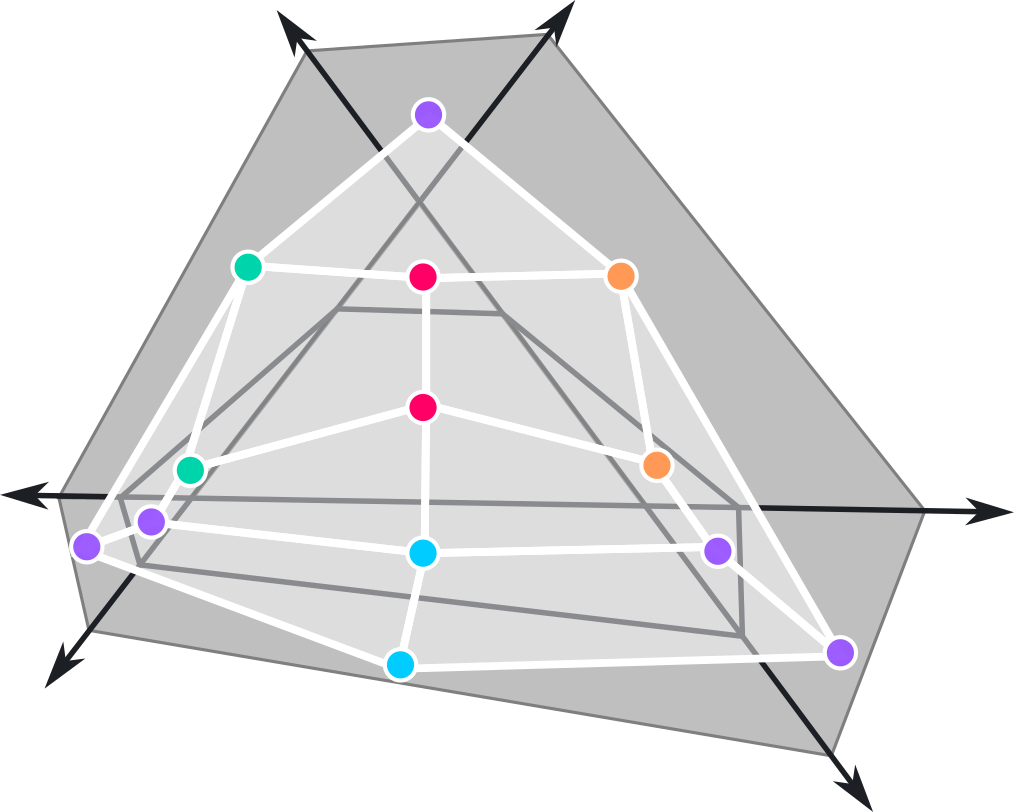"}

	\caption{An illustration of a canonical polyhedral complex, $\mathcal{C}(F)$, on the left. Its geometric dual sign sequence complex $\mathcal{S}(F)$ is superimposed in white on the right, with one vertex for each region of $\mathcal{C}(F)$. As we prove in general, it is cubical, with each two-cube (quadrilateral) containing a unique vertex of $\mathcal{C}(F)$.}
	
	\label{fig:canonicalpoly}
\end{figure}

 % MOVE THIS TO NEXT SECTION? 
In particular, vertices of $\mathcal{C}(F)$ correspond to $n_0$-cells of $\mathcal{S}(F)$. Since the complex $\mathcal{S}(F)$ is pure and $n_0$-dimensional, knowing which $n_0$-cells are present is sufficient to determine all face relations in the subcomplex of the hypercube. A corollary is: 

\newtheorem*{thm:verticeskey}{Corollary}
\begin{thm:verticeskey}
	The sign sequences of the vertices of $\mathcal{C}(F)$ determine the face poset of the polyhedral complex $\mathcal{C}(F)$. 	
\end{thm:verticeskey}

\subsection{Algebraic structure of the sign sequence complex}

	\begin{figure}[h]
		\centering
		
		\includegraphics[width=0.5\linewidth]{"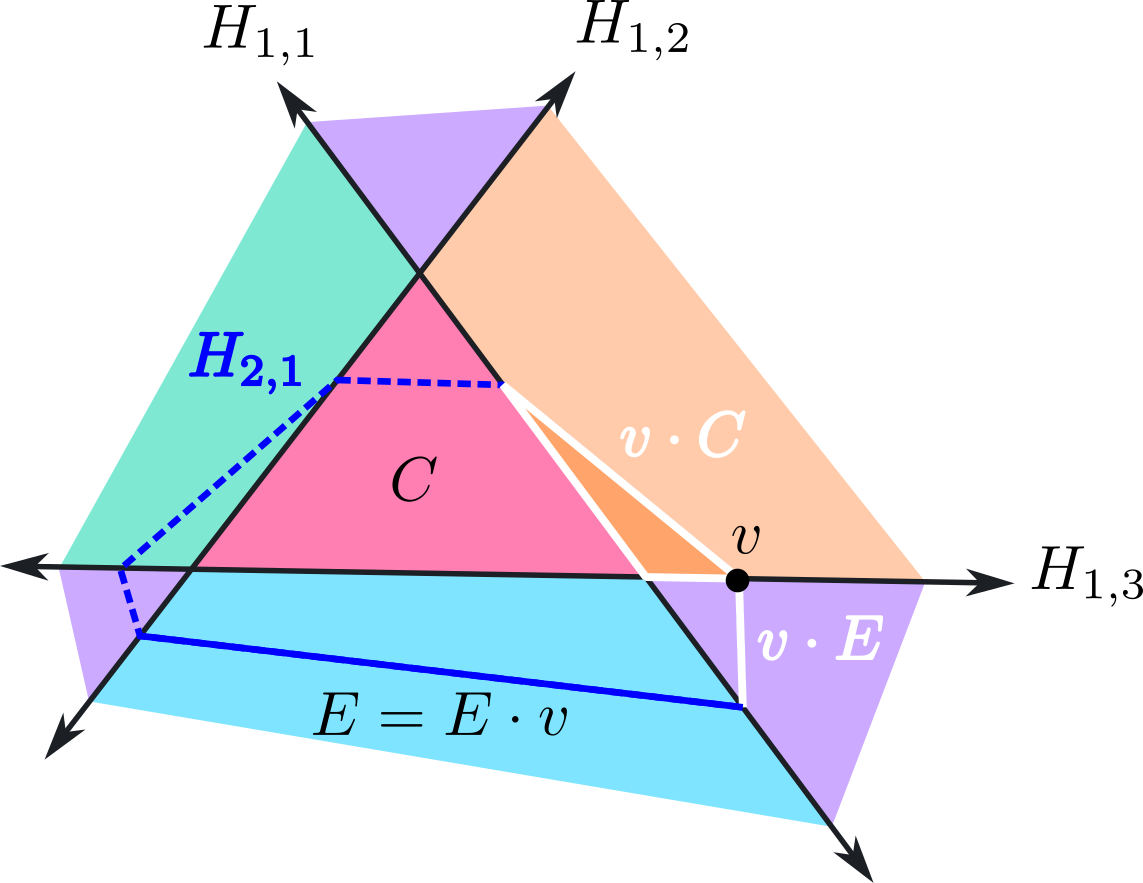"}
		
		\caption{The algebraic structure on the sign sequence cubical complex, pictured geometrically. Under one possible co-orientation of hyperplanes, these cells have the sign sequences indicted in Table \ref{tab:ss}. The product is computed and pictured for certain pairs of cells.}
		\label{fig:product}
	\end{figure}

\begin{table}
	\centering
	\begin{tabular}{cc}
		\toprule
		Cell & Sign Sequence \\ 
		\midrule
		$v$ & \texttt{(1,1, 0, 0)}\\
		$E$ & \texttt{(1,1,-1, 0)} \\
		$C$ & \texttt{(1,1, 1,-1)}	\\	
		$v \cdot C$ & \texttt{(1,1, 1,-1)}\\ 
		$v \cdot E$ & \texttt{(1,1,-1, 0)} \\ \bottomrule
	\end{tabular}
\caption{The sign sequence of the cells in Figure \ref{fig:product}, together with some computed products.}
\label{tab:ss}
\end{table}

The sign sequence complex has only one combinatorial type, cubes. This uniformity is related to a package of formal properties which will be crucial in algorithmic implementation, with the existence of a multiplicative structure being particularly helpful. 

\newtheorem*{faceproperties}{Lemma \ref{lem:faceproduct}}
\begin{faceproperties}
	If $C$ and $D$ are two cells of $\mathcal{C}(F)$, the product $S(C)\cdot S(D)$ given by: 
	
	$$(S(C)\cdot S(D))_{ij} =\begin{cases}
	S(C)_{ij} & \text{if }S(C)_{ij} \neq 0 \\ 
	S(D)_{ij} & \text{otherwise}
	\end{cases} $$ 
	
	is well-defined as a product between sign sequences; that is, there exists a cell $E$ in $\mathcal{C}(F)$ such that $S(C)\cdot S(D) = S(E)$ for all cells $C$ and $D$. Thus, sign sequences of $\mathcal{C}(F)$ form a semigroup.	
\end{faceproperties}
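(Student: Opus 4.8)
The plan is to recognize the asserted product as the \emph{composition} (or covector) operation from the theory of oriented matroids, and to isolate the only nontrivial content, which is the \emph{closure} property: that $S(C)\cdot S(D)$ is again realized by some cell. Associativity, and hence the semigroup structure, is formal --- for arbitrary sign sequences the rule ``keep the first nonzero entry among the factors, read in order'' is visibly associative on the abstract monoid $\{-1,0,1\}^N$, so once closure is established the realizable sign sequences are automatically a subsemigroup. Writing $\sigma=S(C)$ and $\tau=S(D)$, everything thus reduces to showing that the region $\mathcal{R}(\sigma\circ\tau)=\{x:\sgn(F_{ij}(x))=(\sigma\circ\tau)_{ij}\ \forall (i,j)\}$ is nonempty whenever $\mathcal{R}(\sigma)$ and $\mathcal{R}(\tau)$ are, working under the standing genericity and supertransversality hypotheses.

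I would first recall the model case of a genuine hyperplane arrangement, where each $F_{ij}$ is affine. Picking $x$ realizing $\sigma$ and $y$ realizing $\tau$, the segment $z_t=(1-t)x+ty$ realizes $\sigma\circ\tau$ for all small $t>0$: on coordinates with $\sigma_{ij}\neq 0$ the value $F_{ij}(z_t)$ keeps the sign of $F_{ij}(x)$ by continuity, while on coordinates with $\sigma_{ij}=0$ it equals $t\,F_{ij}(y)$ to leading order, hence has sign $\tau_{ij}$. This settles the first hidden layer outright, since the node maps $F_{1j}$ are affine on $\R^{n_0}$ and form an honest arrangement: the segment places $z_t$ in the cell $C_1'$ of $\mathcal{C}(F_1)$ whose layer-$1$ sign is exactly $\sigma^1\circ\tau^1$, on whose closure $F_1$ restricts to a single affine map $L$.

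The core of the argument is an induction on the number of layers via the factorization $F=G\circ F_1$, where $G$ is the ReLU network on $\R^{n_1}$ carrying layers $2,\dots,m$. Splitting sign sequences as $\sigma=(\sigma^1,\sigma^{>1})$, the key structural fact is that over $\overline{C_1'}$ one has $F_{ij}=G_{ij}\circ L$ for $i\ge 2$, so the deeper sign sequence of a point $p\in\overline{C_1'}$ equals the $\mathcal{C}(G)$-sign sequence of $L(p)$, and $\CF$ restricted to $\overline{C_1'}$ is the pullback under the affine map $L$ of $\mathcal{C}(G)$. By the inductive hypothesis for $G$, the composite $\sigma^{>1}\circ\tau^{>1}$ is realized by a nonempty region of $\R^{n_1}$ accumulating at $L(x)=F_1(x)$; it then suffices to produce a point $p\in C_1'$ (so that its layer-$1$ sign is the desired $\sigma^1\circ\tau^1$) whose image $L(p)$ lands in that region. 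Since $L(x)=F_1(x)$ lies in the closure both of $L(C_1')$ and of $\mathcal{R}_{\mathcal{C}(G)}(\sigma^{>1}\circ\tau^{>1})$, this becomes a local nonemptiness-of-intersection statement near $F_1(x)$.

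I expect this last reduction to be the main obstacle, and it is precisely where supertransversality is needed. The naive hope that the straight segment $z_t$ already works \emph{fails}: after folding, the direction of motion in $\R^{n_1}$ is $L(y)-F_1(x)$ rather than $F_1(y)-F_1(x)$, and these disagree exactly on the coordinates where $C$ and $D$ lie on opposite strict sides of a layer-$1$ hyperplane, so the deeper zeros of $\sigma$ get resolved to the wrong signs. The fix is to choose the approach direction \emph{inside} the affine image $L(C_1')$ instead of along $F_1(z_t)$; supertransversality guarantees that the bent hyperplanes of the deeper layers meet $C_1'$ transversally and in general position, so that $L(C_1')$ meets the region realizing $\sigma^{>1}\circ\tau^{>1}$ in the expected dimension near $F_1(x)$, forcing the required intersection to be nonempty. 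The codimension bookkeeping of Lemma \ref{lem:countzeros} is what certifies that these transversal intersections have the expected dimension at each inductive stage, so that no degeneracy obstructs resolving exactly the intended coordinates. With closure in hand, the formal associativity of $\circ$ yields the semigroup and completes the proof.
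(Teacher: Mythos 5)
Your proposal follows essentially the same route as the paper: peel off the first layer and induct on the deeper network (your $G$ is the paper's $F^{(2)}$), treat the base case as an honest hyperplane arrangement, realize the composite cell as $R_3\cap F_1^{-1}(E')$, and invoke supertransversality to propagate nonemptiness from the shared lower face up to the composed cell --- the local intersection step you correctly flag as the main obstacle is exactly the paper's Lemma \ref{lem:facerelations_transversality}, and your ``accumulation at $F_1(x)$'' strengthening of the inductive hypothesis is the face relation $C\le E$ that the paper likewise carries through its induction. One small correction: the paper's proof of this lemma uses only supertransversality and not genericity, so your appeal to Lemma \ref{lem:countzeros} for expected-dimension bookkeeping is unnecessary (and that lemma by itself certifies dimensions of intersections that exist, not their nonemptiness --- the submersion/open-mapping argument inside Lemma \ref{lem:facerelations_transversality} is what forces the interiors to meet).
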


Following from similar constructions in hyperplane arrangements \cite{hyperplanes} and oriented matroids we obtain the following:

\newtheorem*{morefaceproperties}{Lemma \ref{lem:faceproperties}}
\begin{morefaceproperties}
	For all supertransversal networks, the following relations hold for all $C$ and $D$ in $\mathcal{C}(F)$,  where $\leq$ is the relation "is a face of":
	\begin{itemize}	
		\item $C \leq D$ if and only if $S(C)\cdot S(D) = S(D)$ 
		\item $S(C) \cdot S(D) = S(D) \cdot S(C)$ if and only if there is a cell $E$ with $D\leq E$ and $C\leq E$. 	
		\item $S(C) \cdot S(D) = S(C)$ if and only if $C$ is contained in the intersection of the maximal set of bent hyperplanes which contain $D$. 
	\end{itemize}
\end{morefaceproperties}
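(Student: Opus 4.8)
The plan is to translate the identity $S(C)\cdot S(D)=S(C)$ into a statement about zero-sets and then read off its geometric meaning through Definition \ref{def:signsequence}. By Lemma \ref{lem:faceproduct} the string $S(C)\cdot S(D)$ is itself the sign sequence of a genuine cell, and by Theorem \ref{thm:injective} distinct cells have distinct sign sequences, so the equation $S(C)\cdot S(D)=S(C)$ is equivalent to a coordinatewise comparison of sign strings. Unwinding the definition of the product entry by entry, on every index $(i,j)$ with $S(C)_{ij}\neq 0$ the two sides agree automatically, whereas on the indices with $S(C)_{ij}=0$ the product returns $S(D)_{ij}$, which must then also vanish. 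Writing $Z(X)=\{(i,j):S(X)_{ij}=0\}$ for the zero-set of a cell $X$, this shows $S(C)\cdot S(D)=S(C)$ holds if and only if $Z(C)\subseteq Z(D)$, with no constraint on the remaining coordinates.

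Next I would convert the inclusion $Z(C)\subseteq Z(D)$ into the bent-hyperplane language of the statement. Since $F_{ij}$ is affine on each cell of $\CF$, Definition \ref{def:signsequence} gives $S(X)_{ij}=\sgn(F_{ij}(X))=0$ exactly when $F_{ij}$ vanishes identically on $X$, i.e. when $X\subseteq\{F_{ij}=0\}$; thus $Z(X)$ is precisely the index set of the bent hyperplanes containing $X$. The inclusion $Z(C)\subseteq Z(D)$ therefore says every bent hyperplane containing $C$ also contains $D$, which is the assertion that $D$ lies in the intersection of the maximal set of bent hyperplanes which contain $C$ — the carrier flat $\bigcap_{(i,j)\in Z(C)}\{F_{ij}=0\}$. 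This is the geometric condition of the third bullet. Lemma \ref{lem:countzeros} provides the final ingredient: the carrier flat has codimension $|Z(C)|$, hence dimension $\dim C$, so the set-theoretic containment ``$D$ is a subset of this flat'' and the zero-set inclusion $Z(C)\subseteq Z(D)$ are genuinely equivalent rather than only one-directional.

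The step that demands the most care is fixing the direction of the inclusion, which is exactly where the displayed condition is easiest to misread. The equation $S(C)\cdot S(D)=S(C)$ constrains only the coordinates where $S(C)$ already vanishes, forcing the zeros of $C$ into the zeros of $D$; so it is $D$, the cell of equal or smaller dimension, that must lie on every bent hyperplane through $C$, with containment running from $D$ into the carrier flat of $C$. I would pin this orientation down against the worked cells of Table \ref{tab:ss}: there $S(E)\cdot S(v)=S(E)$ while the vertex $v$ lies on the single bent hyperplane carrying the edge $E$, confirming that the sign sequence returned by the product is that of the higher-dimensional cell. The only arrangement-theoretic input beyond the definitions is Lemma \ref{lem:countzeros}, invoked once to guarantee the carrier flat has the expected dimension so that the two formulations of containment coincide.
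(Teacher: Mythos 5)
Your proposal addresses only one third of the statement. The lemma has three bullets, and your entire argument concerns the identity $S(C)\cdot S(D)=S(C)$; the first two bullets never appear. They do not follow from your zero-set calculus, because that calculus is purely formal string manipulation and never produces the \emph{existence} of a cell realizing a given sign sequence, which is exactly where the content of the lemma (and the supertransversality hypothesis) lives. Concretely: for the first bullet, the direction ``$S(C)\cdot S(D)=S(D)$ implies $C\leq D$'' requires Lemma \ref{lem:faceproduct} — the product is realized as $S(E)$ for an actual cell $E$ with $C\leq E$ — combined with injectivity of $s$ (Theorem \ref{thm:injective}) to force $E=D$; the converse direction needs the geometric observation that a face inherits the nonzero signs of its coface (if $s_{ij}(C)=\pm1$ disagreed with $s_{ij}(D)$, the cells would be separated by, or differently situated relative to, the $(i,j)$th bent hyperplane, contradicting $C\subseteq D$). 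The second bullet — the product commutes iff $C$ and $D$ admit a common coface — then follows from the first bullet together with Lemma \ref{lem:faceproduct}, which is how the paper dispatches it. As submitted, the proposal is therefore incomplete as a proof of the stated lemma.

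On the bullet you do treat, your argument is correct, and you in fact resolve an inconsistency in the paper: the entrywise computation shows $S(C)\cdot S(D)=S(C)$ iff $Z(C)\subseteq Z(D)$, i.e.\ every bent hyperplane containing $C$ also contains $D$, which is the conclusion reached in the body of the paper's own proof — whereas the displayed bullet (in both the main text and the appendix statement) prints the roles of $C$ and $D$ transposed. Your third paragraph, pinning the direction against Table \ref{tab:ss}, gets this right. Two of your supporting citations are superfluous, though harmless: equality of strings in $\{-1,0,1\}^N$ is coordinatewise by definition, so neither Lemma \ref{lem:faceproduct} nor Theorem \ref{thm:injective} is needed to license the entrywise comparison; and the equivalence between $Z(C)\subseteq Z(D)$ and $D\subseteq\bigcap_{(i,j)\in Z(C)}F_{ij}^{-1}(0)$ is immediate from well-definedness of $s$ (each $F_{ij}$ has constant sign on a cell), with no dimension count from Lemma \ref{lem:countzeros} required. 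Be careful with the phrase ``carrier flat'': the intersection $\bigcap_{(i,j)\in Z(C)}F_{ij}^{-1}(0)$ is a union of polyhedra in a bent-hyperplane arrangement, not an affine subspace, so if the codimension argument were actually load-bearing it would need justification beyond Lemma \ref{lem:countzeros}; fortunately it is not needed at all.
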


These characterizations are primarily useful for using code to track face relations via a discrete structure. As we implement in Section \ref{sec:db} we can use these to compute the topological properties of the decision boundary of a network using sign sequences.

\section{Computation of the sign sequence complex} 

% FLAG: SHOULD I CALL THIS DIFFERENTLY? ALGORITHM IMPLEMENTING SIGN SEQUENCE COMPLEX? 

The properties of the sign sequence complex make it straightforward to compute the combinatorial properties of the polyhedral complex of a network across all dimensions upon obtaining the sign sequences of the vertices of $\mathcal{C}(F)$.
Moreover, these sign sequences follow from locating potential vertices, thus knowing locations of the $0$ entries in its sign sequence, and then evaluating the network to obtain its remaining signs.

\subsection{Obtaining the sign sequence complex}

\label{sec:algorithm}
In \cite{gloriginal}, the canonical polyhedral complex $\mathcal{C}(F)$ is defined iteratively through layers. Letting $R^{(k)}$ be the polyhedral complex associated with the hyperplane arrangement in layer $k$, the complex $\mathcal{C}(F_k \circ ... \circ F_1)$ is given precisely by the intersection complex of $\mathcal{C}(F_{k-1}\circ ... \circ F_1)$ and $(F_{k-1}\circ ... \circ F_1)^{-1}(R^{(k)})$. (See Definition \ref{def:canonicalcomplex}). To obtain the vertices of a particular network's canonical polyhedral complex, we may therefore begin by obtaining the vertices corresponding to $\mathcal{C}(F_1)$, its first layer's canonical polyhedral complex, which is immediate.

\newtheorem*{lem:firstlayervertices}{Lemma \ref{lem:firstlayervertices}}
\begin{lem:firstlayervertices} 
	The $0$-cells of $F_1$ are given by the solutions to $$\{W_{\alpha} x = b_\alpha: \alpha \subset [n_1] \;\& \; |\alpha|=n_0  \}$$ where $W$ is the weight matrix of the network and $\alpha$ denotes a subset of the $n_1$ vertices. 
	
	A vertex $v$ obtained by solving $W_\alpha x = b_\alpha$ satisfies $s_{i}(v)=0$ iff $i \in \alpha$.  
\end{lem:firstlayervertices}

The sign sequences of the top-dimensional regions which are present in $\mathcal{C}(F_{1})$ are determined by the sign sequences of the vertices of $\mathcal{C}(F_1)$, ignoring signs corresponding to neurons in later layers, as described by Lemma \ref{lem:coboundary}. 

Following the computation of the first layer, subsequent layers' vertices may be found by analyzing the preimage of each bent hyperplane for additional intersections of $k$ bent hyperplanes from the new layer together with $n_0 - k$ bent hyperplanes from the previous layers. Since $F_{k-1}\circ ... \circ F_1$ is affine on each region of $\mathcal{C}(F_{k-1}\circ ... \circ F_1)$ we denote the affine map $A_R$. Restricted to this region, $F_{ij}(x)$ is affine, and we call this affine map $A_{ij}$.

\newtheorem*{lem:laterlayervertices}{Lemma \ref{lem:laterlayervertices}}
\begin{lem:laterlayervertices}
	Let $F$ be a generic, supertransversal neural network. 

If $C$ is a cell of $\mathcal{C}(F_{k-1} \circ ... \circ F_1)$, then $F_{ij}(C)$ is affine for all $i \leq k$. Then, 

\begin{enumerate}
	\item	All $0$-cells of $\mathcal{C}(F_k \circ ... \circ F_1)$  which are contained in the closure of $C$ and which are not already in $\mathcal{C}(F_{k-1}\circ ... \circ F_1)$ are the solution to a system of $n_0$ affine equations, of which $1 \leq \ell \leq n_0$ are of the form:
	
	$$ A_{kj}(x)=0  $$
	
	and $0 \leq n_0 - \ell \leq n_0 - 1$ equations are of the form: 
	
	$$A_{ij}(x) = 0 ; i < k $$ 
	
	Here the the $A_{ij}$ of the $n_0-\ell $ equations from earlier layers are selected such that there exists a vertex of $C$ in the intersection of the corresponding bent hyperplanes. % The remaining $n_0-\ell$ equations describe the affine span of a face of $C$.
	
	\item 	Furthermore, a solution $x$ to the system of equations described in (1) corresponds to a $0$-cell of $\mathcal{C}(F_k\circ ... \circ F_1)$ contained in the closure of $C$ if and only if, for all remaining $(i,j)$ pairs with $i \leq k-1$, we have that $s_{ij}(x)=s_{ij}(C)$.
	
\end{enumerate}
\end{lem:laterlayervertices}

We may therefore proceed iteratively through layers in order to obtain the full polyhedral complex. In summary, 

\newtheorem*{sum:algorithm}{Computing Sign Sequences}
\begin{sum:algorithm}
	\label{sum:algorithm}
	To obtain the vertices of $\mathcal{C}(F)$: 
	\begin{enumerate}
		\item Compute the intersections of the hyperplanes from the first layer, as in Lemma \ref{lem:firstlayervertices}. Obtain their sign sequences by evaluating $F_{ij}$ on each intersection. This obtains $\mathcal{C}(F_1)$.  
		\item To compute $\mathcal{C}(F_i)$, loop through regions $C$ in $\mathcal{C}(F_{i-1})$. On each region $C$,
		\begin{enumerate}
			\item For $1 \leq k \leq n_0$, compute the intersections of $k$ bent hyperplanes from the new layer with $n_0 - k$ bent hyperplanes from previous layers, the latter of which are selected so that their intersection forms an $n_0-k$-face of $C$. 
			\item Evaluate $F_{ij}(x)$ for each computed intersection $x$. Then keep $x$ as a vertex of $\mathcal{C}(F)$ if and only if $F_{ij}(x) = F_{ij}(C)$ for $i \leq {k-1}$, following Lemma \ref{lem:laterlayervertices}.
		\end{enumerate}
	\end{enumerate}
\end{sum:algorithm}

\subsection{Obtaining decision boundary topology from the sign sequence complex} 

\label{sec:db}

The characterization of $\mathcal{C}(F)$ as dual to a cubical complex permits us to define a straightforward mod-two cellular coboundary, which is the transpose of the boundary operation on the cubical complex.

\newtheorem*{lem:coboundary}{Lemma \ref{lem:coboundary}}
\begin{lem:coboundary}
	Let $C$ be a cell of $\mathcal{C}(F)$. Then the cells $D$ of which $C$ is a facet are given by the set of cells with sign sequence given by $S(D)_{ij}=S(C)_{ij}$ for all $i$ and $j$ except for exactly one, a location for which $S(C)_{ij}=0$.
\end{lem:coboundary}

We recover the decision boundary of a network by noting that the subcomplex of cells $C$ in $\mathcal{C}(F)$ satisfying $S(C)_{N}=0$ are those cells which satisfy $F(C)=0$. The cells satisfying $F(C)=0$ have vertices with $F(v)=0$, so by locating those vertices in $\mathcal{S}(F)$ which have a $0$ in the last coordinate, this coboundary operation may be used to construct a cochain complex of cells of the decision boundary by restricting the image of the coboundary to those cells whose last sign in their sign sequence is also zero. Mod two, the cells of $\mathcal{C}(F)$ have a boundary exactly dual to this coboundary in $\mathcal{S}(F)$.

In general, the presence of unbounded cells makes this map not quite correspond to a cellular chain complex. In particular, not every edge has two vertices. By adding a single `vertex at infinity' to unbounded edges, the corresponding cochain complex has a straightforward interpretation as the cochain complex of the one-point compactification of the decision boundary.  

\begin{figure}
	\centering
	\includegraphics[width=0.5\linewidth]{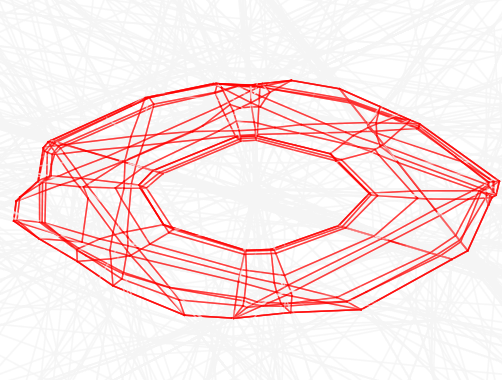}
	\caption{In red, the decision boundary of a neural network with architecture $(3,15,8,1)$. The network was trained using stochastic gradient descent to distinguish a points sampled from a torus from points sampled from an annulus at its center.}
\end{figure}

\subsection{Numerical stability and algorithmic complexity}

Naively, if we compute a solution $x$ to $F_{ij}(x)=0$, and then numerically evaluate the node map $F_{ij}(x)$, the result may not be exactly zero due to floating point error. However, a consequence of Lemma \ref{lem:laterlayervertices} is that machine epsilon-level errors obtained when solving for the location of a vertex as the intersection of $n_0$ bent hyperplanes do not lead to errors in computing the sign sequence of a vertex. When determining the sign sequence of a vertex, which of its signs are zero is determined by which hyperplanes were intersected, and the remaining signs are stable to small perturbations since the sets $F_{ij}>0$ and $F_{ij}<0$ are open sets. As long as the error in computing solutions to linear equations is small compared to the size of the cells in the polyhedral complex, the proposed algorithm will find the correct sign sequence of each vertex, and as a result the correct combinatorics of the polyhedral complex. 

Furthermore, as deep ReLU networks only have polynomially many regions in the number of hidden units, at least at initialization \cite{haninactivation}, and the number of possible combinations of $k$ neurons from $n_i$ neurons together with $n_0-k$ neurons from $n_0 + ... + n_{k-1}$ neurons is also polynomial in the total number of hidden units, it is possible to obtain the canonical polyhedral complex $\mathcal{C}(F)$ in polynomial time in the number of hidden units.

\section{Applications to network decision boundaries at initialization}
\label{sec:experiments}
We use this theoretical framework to make experimental observations. We obtain statistics about the decision boundaries of networks, and find stark differences in the behaviors of shallow and deeper network architectures. To our knowledge, this is the first experimental determination of the \textit{exact} topology of a large collection of decision boundaries with input dimension greater than two while having more than one hidden layer.

\subsection{Experimental design}

We randomly initialize 50 networks of fully-connected architectures $(k, n, 1)$ and $(k,n,n,1)$ for $2\leq k \leq 4$ with standard normal weights and biases. We will call the networks of architecture $(k,n,1)$ "shallow" and those of architecture $(k,n,n,1)$ "deep" for the purposes of comparison in this section. We compute the canonical polyhedral complex $\mathcal{C}(F)$ for each network using an implementation of the algorithm described in Section \ref{sum:algorithm} using Pytorch linear algebra solver \cite{pytorch}. We then obtain the Betti numbers $\beta_i$ for $0\leq i \leq k-1$ of the one-point compactification of the decision boundary of the network at initialization, by constructing the boundary map determined in Lemma \ref{lem:coboundary}. The Betti numbers were obtained using a Sage implementation of general chain complexes \cite{sage}. The resulting Betti numbers provide a measure of topological complexity of the decision boundary at initialization. 

\subsection{Results}

We observe that the topology of the decision boundary for shallow networks, regardless of input dimension, remains remarkably constant over the range of dimensions investigated. (See Figure \ref{fig:homologyatinit}). In contrast, for deep networks, there is both greater variability in the distribution of the topology of the decision boundary, and increasing width appears to, at least for $n_0>2$, lead to the the topological properties of the decision boundary changing in distribution as the width $n$ increases. We conjecture that a plausible explanation is that deep networks require greater width before their network functions are in the Gaussian regime.

\begin{figure}[h]
	\centering 
	\includegraphics[width=0.9\linewidth]{"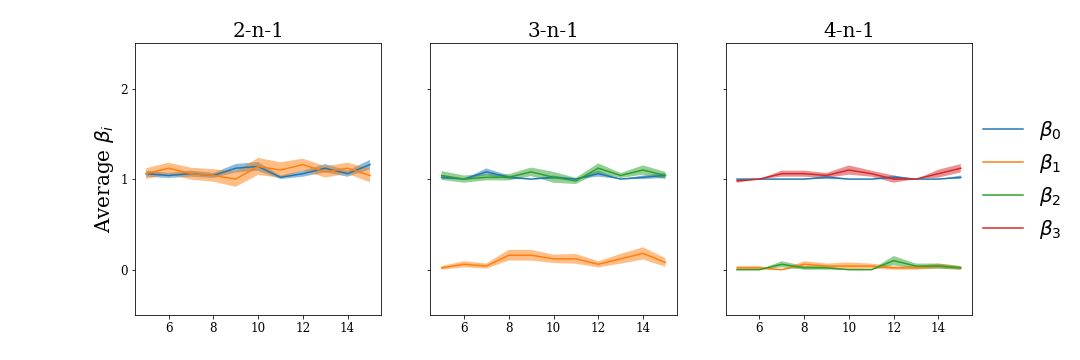"}\\
	\includegraphics[width=0.9\linewidth]{"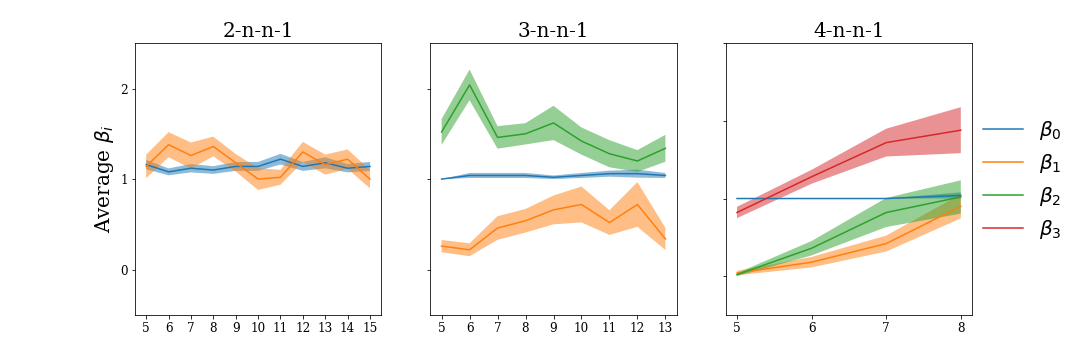"}
		\caption{Average Betti numbers of the network decision boundary at initialization for a range of networks of shallow architecture (top) with deep architecture (bottom), together with standard error. The distribution of topological properties of the decision boundaries at initialization is surprisingly constant for shallow networks, and more variable for deeper networks.}
	\label{fig:homologyatinit}
\end{figure}

The Betti numbers of the one-point compactification measures the number of bounded and unbounded components of the decision boundary. Since all unbounded components are compactified by attaching them to the same point, in the compactification they correspond to $n_0-1$ cycles belonging to the same connected component. So the number of bounded and unbounded components can be computed by $\beta_0 - 1$ and $\beta_{n_0-1} - \beta_0 + 1$, respectively. 

 We observe that \textit{bounded} connected components of the decision boundary at initialization are rare, with frequency decreasing with input dimension in both shallow and deep networks: For example, $8.1\%$ of networks of the form $(2,n,1)$ contain at least one bounded component, whereas only $0.05\%$ of networks of the form $(4,n,1)$ contain as much at initialization. Furthermore, regarding the number of unbounded components, across all shallow networks investigated, the largest number of unbounded components observed at initialization was 3, with a mode of 1 (average 1.0, 1.03 and 1.04,  for $n_0=2,3,$ and $4$ respectively). In deeper networks, in contrast, the number of unbounded components at initialization appears to be on average greater ($1.1, 1.4, 1.4$, respectively) reaching maximum observed values of $5$, $7$ and $12$ for $n_0 = 2,3,$ and $4$ respectively. However, the most common observation is still that a network at initialization has one unbounded connected component, and whether there is any trend associated with width is unclear. 
 
 Table \ref{tab:bettis} summarizes distributional information about the Betti numbers of the decision boundary, and Table \ref{tab:ccs} summarizes information about the connected components of the decision boundary. Figure \ref{fig:ccs} gives additional distributional information for selected architectures. While shallow architectures again have a very constant distribution of the number of unbounded components even across input dimension, the number of unbounded components seen at initialization in deeper architectures is much greater, and the distributional variability with width is apparent.

 \begin{table}[H]
 	\centering 
 	
 	\caption{Betti numbers of the compactified decision boundary dependent on architecture, across the range of widths studied ($50$ networks of each architecture). In $\beta_{n_0-1}$, deeper architectures exhibit greater variability and greater apparent change with width across the range of widths studied.}
 	
 	\begin{tabular}{lcccl}\toprule
 		\multicolumn{5}{c}{Shallow Architectures} \\
 		\midrule 
 		& \multicolumn{2}{c}{$\beta_0$} & \multicolumn{2}{c}{$\beta_{n_0-1}$}
 		\\\cmidrule(lr){2-3}\cmidrule(lr){4-5}
 		& Average & SE & Average & SE \\\midrule
 		$(2,5,1)$   & 1.06 & 0.034 & 1.06 & 0.059   \\
 		$(2,15,1)$ & 1.16 & 0.052 & 1.04 & 0.075 \\
 		\midrule 
 		$(3,5,1)$ & 1.02 & 0.019 & 1.04 & 0.048  \\
 		$(3,15,1)$ & 1.04 & 0.027 & 1.04 & 0.040 \\
 		\midrule 
 		$(4,5,1)$ & 1.00 & 0.000 & 0.98 & 0.020 \\
 		$(4,15,1)$ & 1.02 & 0.020 & 1.12 & 0.046 \\ 
 		\midrule \midrule 
 		\multicolumn{5}{c}{Deep Architectures} \\
 		\midrule 
 		& \multicolumn{2}{c}{$\beta_0$} & \multicolumn{2}{c}{$\beta_{n_0-1}$}
 		\\\cmidrule(lr){2-3}\cmidrule(lr){4-5}
 		& Average & SE & Average & SE \\\midrule
 		$(2,5,5,1)$    &1.16 &0.05 & 1.14 & 0.13   \\
 		
 		$(2,15,15,1)$  &1.14 &0.05 &1.00 &0.10 \\\midrule
 		$(3,5,5,1)$ & 1.00 & 0.00 & 1.52 & 0.14 \\
 		$(3,13,13,1)$ & 1.04 & 0.028 & 1.34& 0.15 \\	\midrule 
 		$(4,5,5,1)$& 1.00 & 0.00 & 0.82 & 0.07 \\
 		$(4,8,8,1)$& 1.04 &0.03 & 1.88& 0.30 \\ \bottomrule
 	\end{tabular}
 	\label{tab:bettis}
 \end{table}
 
 \begin{table}[h]
 	\centering 
 	\caption{Average number of bounded and unbounded components of the decision boundary dependent on architecture.}
 	\begin{tabular}{lcccl}\toprule
 		\multicolumn{5}{c}{Shallow Architectures} \\
 		\midrule 
 		& \multicolumn{2}{c}{Unbounded} & \multicolumn{2}{c}{Bounded}
 		\\\cmidrule(lr){2-3}\cmidrule(lr){4-5}
 		& Average & SE & Average & SE \\\midrule
 		$(2,5,1)$  &1.00 & 0.070 & 0.06 & 0.034    \\
 		$(2,15,1)$  & 0.88 & 0.073 &  0.16 & 0.052  \\
 		$(3,5,1)$  & 1.02 & 0.053 & 0.02 & 0.020  \\
 		$(3,15,1)$ &1.00 &  0.040& 0.04 & 0.023 \\
 		$(4,5,1)$ & 0.98 & 0.020 & 0.00  & 0.00 \\
 		$(4,15,1)$ &1.10 & 0.042 & 0.02 &0.020 \\ \midrule \midrule 
 		\multicolumn{5}{c}{Deep Architectures} \\
 		\midrule 
 		& \multicolumn{2}{c}{Unbounded} & \multicolumn{2}{c}{Bounded}
 		\\\cmidrule(lr){2-3}\cmidrule(lr){4-5}
 		& Average & SE & Average & SE \\\midrule
 		$(2,5,5,1)$   & 0.98 & 0.140 & 0.16 & 0.052   \\
 		$(2,15,15,1)$ & 0.86 & 0.100 & 0.14 & 0.049 \\
 		$(3,5,5,1)$ & 1.52 & 0.142 & 0.00 & 0.000  \\
 		$(3,10,10,1)$ & 1.38 & 0.155 & 0.04 & 0.027\\
 		$(4,5,5,1)$ & 0.82 & 0.0730 & 0.00 & 0.000 \\
 		$(4,8,8,1)$ & 1.84 & 0.301 & 0.04 &0.040 \\\bottomrule
 	\end{tabular}
 	\label{tab:ccs}
 \end{table}

These observations lend additional credence to the notion that depth has a stronger influence than width on the topological complexity that a network can be easily trained to express.

 \section{Conclusion and further directions}
 \label{sec:conclusion}
  We have presented a new combinatorial characterization of ReLU network functions, and demonstrate the utility of this characterization for obtaining topological information about networks which was previously difficult to access. The experiments illustrate the utility of this characterization for driving further experimental research on the properties of ReLU networks in different conditions. Furthermore, we believe this framework proposed could be used to derive additional properties of ReLU networks.

 Theoretically, we have provided a foundation to study the structure of the canonical polyhedral complex across all dimensions. In practice, a drawback to using this algorithm in empirical work is that the algorithm is still an exponential process in the input dimension, so realistically only low-dimensional slices of the true decision boundary of a network can be investigated empirically.

 We primarily believe that this tool can be useful to theoreticians, in that it provides a local characterization of vertices of $\mathcal{C}(F)$, for example in obtaining distributional properties of local curvature which rely on a characterization of the low-dimensional connectivity. This local characterization makes piecewise linear analogs of Morse theory as in \cite{grunert} more accessible to apply to $\mathcal{C}(F)$ by providing a local combinatorial identification between neighborhoods of vertices of $\mathcal{C}(F)$ and neighborhoods of coordinate axes.

 One application of this work is to analyze topological generalization of networks. A key indicator of a network's generalizability is whether its sublevel sets have the appropriate topological properties \cite{Bianchini2014}. In addition, the architecture of a classification network influences the topology of the expressible decision boundaries of that network \cite{guss}. Empirically, topological data analysis provides practical approximation for low-dimensional features of high dimensional data. While approximations exist to obtain the topological properties of a network's decision boundaries using topological data analysis \cite{homologydb}, these properties are dependent on the geometry of the network function. In places where the network's decision boundary has high curvature, the approximation methods may lead to inaccuracy between the true topology of a network's decision boundary and the topology which is approximated by persistent homology methods, but it is precisely those locations where a network is vulnerable to adversarial examples \cite{db_curvature}. A measure of the true topology of the decision boundary could provide a metric for comparison.
 
 \begin{figure}[H]
 	\centering 
 	\includegraphics[width=\linewidth]{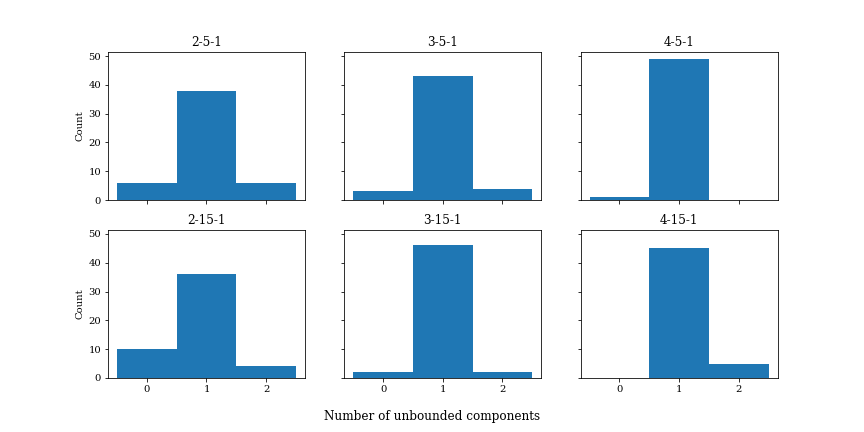}
 	\includegraphics[width=\linewidth]{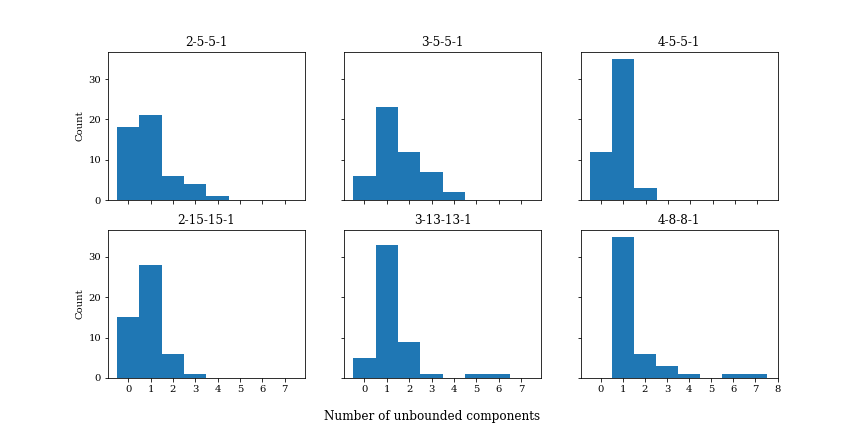}
 	\caption{Distribution of the total number of connected components of the decision boundary for selected architectures.}
 	\label{fig:ccs}
 \end{figure}

We lastly believe it is possible to obtain an explicit understanding of the evolution of a network's decision boundary through a training path.  We conjecture that the changes in the facet structure of $\mathcal{C}(F)$ should be "discrete" in that they would only change at finitely many locations in a general training path, which would thus open an avenue for tracking the vertices of $\mathcal{C}(F)$ as the network trains in parameter space, and establishing theoretical limits on the possible discrete changes that may occur to the set of vertices of $\mathcal{C}(F)$ during training.

 %\section{Conclusions}

\section*{Acknowledgements}
We thank J. Elisendra Grigsby, Kathryn Lindsey, Dev Sinha and Jean-Baptiste Tristan for their insightful comments and questions. 

This work benefited from access to the University of Oregon high performance computer, Talapas.
                              
\bibliographystyle{unsrt}
\bibliography{bibliography}

%%%%%%%%%%%%%%%%%%%%%%%%%%%%%%%%%%%%%%%%%%%%%%%%%%%%%%%%%%%%

\newpage 
\appendix

\section{Mathematical Background and Proofs}

\subsection{Mathematical Background} \label{A:Definitions}

A summary of relevant results and definitions about polyhedral complexes and complexes arising from affine hyperplane arrangements found in \cite{gloriginal}, Sections 2-3, and and \cite{grunert}, Sections 1-2, with the most relevant information repeated below. Namely, we make repeated use of polyhedral geometry: 

\begin{defn}[Polyhedra, Polyhedral Complex, cf. \cite{gloriginal}] \label{def:polycomplex}
	$\;$
	
\begin{itemize}
	\item A \textbf{polyhedron} is an intersection of the form $\bigcap_{1\leq i\leq m} H_i^+$ for some set of (codimension 1) hyperplanes $H_1, ... , H_m \subset \mathbb{R}^n$.
	
	\item We will say a point is on the \textbf{interior} of a polyhedron $P$ if it is on the interior of $P$ with respect to the subspace topology of the affine span of $P$, except when $P$ is a point, in which case by convention its interior is nonempty. We use the notation $P^\circ$ to denote the interior of $P$.
	 
	\item A \textbf{face} of a polyhedron $P$ embedded in $\mathbb{R}^n$ is a set of the form $H \cap P$, where $H$ is a codimension 1 hyperplane in $\mathbb{R}^n$ and $H \cap P$ does not contain any interior point of $P$. (The empty set is a face of any polyhedron.)  A hyperplane which intersects $P$ in a nonempty face is called a \textbf{supporting hyperplane} of $P$. All other hyperplanes which intersect $P$ are called \textbf{cutting hyperplanes} of $P$. We denote the relation "$C$ is a face of $D$" with $C \leq D$.
	
	\item A \textbf{polyhedral complex embedded in $\mathbb{R}^n$} is a set of polyhedra contained in $\mathbb{R}^n$ which is (a) closed under taking faces, (b) closed under intersection, in that the intersection $P_1 \cap P_2$ is the (unique) maximal common face of $P_1$ and $P_2$, which may be empty. 
\end{itemize} 
\end{defn} 

Under this definition, polyhedra may not be bounded, but they are always closed. As a result, an affine hyperplane arrangement induces a natural polyhedral decomposition on its ambient space. We use the following notation for consistency with previous work, providing a notation for affine hyperplane arrangement $R^{(i)}$ associated to an affine map $A_i$. 

\begin{defn}[$R^{(i)}$, $\pi_j$, cf. \cite{gloriginal}, Definition 6.7]\label{def:harrangement}
	Let $A_i: \mathbb{R}^{n_{i-1}}\to \mathbb{R}^{n_i}$ be an affine function for $1 \leq i \leq n$. Denote by $R^{(i)}$ the polyhedral complex associated to the hyperplane arrangement in $\mathbb{R}^{n_{i-1}}$, induced by the hyperplanes given by the solution set to $H_{ij} = \{x \in \mathbb{R}^n:\pi_j \circ A_i(x)=0  \}$, where $\pi_j$ is the linear projection onto the $j$th coordinate in $\mathbb{R}^{n_{i}}$. 
\end{defn}

Continuing to the framework in \cite{gloriginal}, we investigate the following class of neural network functions.

\begin{defn}[\cite{gloriginal}, Definition 2.1]\label{def:ReLU}
	Let $n_0,...,n_m \in \N$. A \textbf{fully-connected ReLU neural network} with \textbf{architecture} $(n_0, ..., n_m,1)$ is a collection 
	$\mathcal{N} = \{ A_i\}$ of affine maps $A_i: \R^{n_i} \to \R^{n_{i+1}}$ for $i=0,...,m$. Such a collection
	determines a function $F_{\mathcal{N}} : \R^{n_0}\to \R$, the \textit{\textbf{associated neural network map}}, given by the composite 
	
	$$\R^{n_0} \xrightarrow{F_1 = \relu \circ A_1} \R^{n_1} \xrightarrow{F_2 = \relu \circ A_2} \R^{n_2} \xrightarrow{F_3 = \relu \circ A_3} ... \xrightarrow{F_m = \relu \circ A_m} \R^{n_m} \xrightarrow{G = A_{m+1}} \R^{1} $$
	
	where ReLU refers to the function $\max\{0,x\}$ applied pointwise \cite{relu}. We say that this network has \textbf{depth} $m+1$ and \textbf{width} $\max\{n_1,...,n_m,1\}$. The maps $F_k$ are called the $k$th \textbf{layer maps}.
\end{defn}

As a piecewise-affine linear function, $F_\mathcal{N}$, which we simplify to $F$, defines an obvious polyhedral decomposition of input space, namely into the (largest) polyhedra on which it is affine-linear. However, in \cite{gloriginal} the authors show the utility of considering not only the decomposition which $F$ itself defines, but the common refinement of decompositions by intermediate composites.

\begin{defn}\label{def:lastlayer}
	If  $F = G \circ F_{m}\circ ... \circ F_1$ is a ReLU neural network with $F:\mathbb{R}^{n_0}\to \mathbb{R}$ , then we denote: 
	
	$$F_{(k)} = F_k \circ ... \circ F_1 $$ 
	
	and write that this is \textbf{$F$ ending at the $k$th layer}.
	
	Likewise, we denote
	
	$$F^{(k)} = G \circ F_{m} \circ ... \circ F_k $$
	
	and call $F^{(k)}: \mathbb{R}^{n_{k-1}}\to \mathbb{R}$ by \textbf{$F$ starting at the $k$th layer}. 
\end{defn}

Thus $F = F^{(k)} \circ F_{(k-1)}$ for any $k$. A definition for the canonical polyhedral complex $\mathcal{C}(F)$ through a universal property can be given as \textit{the common refinement of the polyhedral decomposition of input space such that all $F_{(i)}$ are affine linear on cells}. For implementation, we prefer a definition through explicit identification of cells, using further language from \cite{gloriginal}:

\begin{defn}[\cite{gloriginal}, definition 8.1 ]\label{def:nodemap}
	If $F$ is a ReLU neural network, the \textbf{node map} $F_{i,j}$ is defined by: 
	
	$$\pi_j \circ A_i \circ F_{i-1} \circ ... \circ F_1: \R^{n_0} \to \R$$ 
\end{defn}

In particular, the locus in input space where $F_{ij}=0$ is of particular interest, and to draw analogies to hyperplane arrangements, we use the phrase "bent hyperplane." 

\begin{defn}[\cite{gloriginal}, Definition 6.1 ]\label{def:benthyperplane}
	A \textbf{bent hyperplane} of $\mathcal{C}(F)$ is the preimage of $0$ under a node map, that is, $F_{ij}^{-1}(0)$ for fixed $i,j$. 
\end{defn}

A bent hyperplane can contain polyhedral regions with codimension less than one, but this occurs with zero probability. The conditions under which the bent hyperplanes' maximal cells are always codimension 1 are listed in \cite{gloriginal}.  

 The formal definition of the canonical polyhedral complex $\mathcal{C}(F)$ is defined in \cite{gloriginal} using the notion of a "level set complex," defined in \cite{grunert}. We streamline the definition, working more directly in two ways, as needed below. 

\begin{defn}[Canonical Polyhedral Complex $\mathcal{C}(F)$, cf. \cite{gloriginal}, Definition 6.7]\label{def:canonicalcomplex}
	Let $F: \mathbb{R}^{n_0} \to \mathbb{R}$ be a ReLU neural network with $m$ layers. Define $\mathcal{C}(F)$ as follows: 
	\begin{enumerate}
		\item ("Forward Construction") Define $\mathcal{C}(F_{(1)})$ by $R^{(1)}$ (Definition \ref{def:harrangement}). Then let $\mathcal{C}(F_{(k)})$ be defined in terms of $\mathcal{C}(F_{(k-1)})$ as the polyhedral complex consisting of the following cells: 
		
		$$\mathcal{C}(F_{(k)}) = \left\{ C \cap F_{(k-1)}^{-1}(R) : C \in \mathcal{C}(F_{(k-1)}), R \in R^{(k)} \right \} $$
		
		Then $\mathcal{C}(F)$ is given by $\mathcal{C}(F_{(m)})$. 
		
		\item ("Backwards Construction") Define $\mathcal{C}(F^{(m)})$ by $R^{(m)}$. Then $\mathcal{C}(F^{(k-1)})$ can be defined from $\mathcal{C}(F^{(k)})$ as the polyhedral complex consisting of the following cells: 
		
		$$\mathcal{C}(F^{(k-1)}) = \left\{ R \cap F_{k-1}^{-1}(C) : R \in R^{(k-1)}, C \in \mathcal{C}(F^{(k)}) \right\}$$
		
		Then $\mathcal{C}(F)$ is given by $\mathcal{C}(F^{(1)})$. 
	\end{enumerate}
\end{defn}

That the two are equivalent follows from the distributivity of function preimage over set intersection (or, more generally, associativity of pullbacks). That the resulting construction is indeed a polyhedral complex is discussed thoroughly in \cite{gloriginal} and \cite{grunert}, chapter 2. In particular, we make use of the following lemma regarding boundary relations, rewritten so as to not require additional notation.

\begin{lemma}[cf. \cite{grunert}, Lemma 2.4]\label{lem:facerelations}
	Let $M\subseteq \mathbb{R}^m$ and $N \subseteq \mathbb{R}^n$ be polyhedral complexes and $f: |M| \to \mathbb{R}^n$ be continuous and affine on cells of $M$. Let $\leq$ denote face relations in the respective polyhedral complexes. If $C \leq C'$ are polyhedra in $M$, and $D \leq D'$ are polyhedra in $N$, then 
	
	$$C\cap f^{-1}(D) \leq C' \cap f^{-1}(D') $$ 
	
	is a face relation in the polyhedral complex consisting of the cells $\{ C \cap F^{-1}(D) : D \in N, C \in M \}$. 
\end{lemma}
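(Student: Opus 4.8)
The plan is to prove the face relation directly from the paper's definition of a face (Definition~\ref{def:polycomplex}), by exhibiting a single supporting hyperplane of $E' := C' \cap f^{-1}(D')$ whose intersection with $E'$ equals $E := C \cap f^{-1}(D)$ and contains no interior point of $E'$. The guiding idea is that a face of a polyhedron is cut out by an affine functional that is nonnegative on the polyhedron and vanishes exactly on the face; I would build such a functional for $E'$ by combining the functionals that expose $C \leq C'$ and $D \leq D'$, transporting the latter through $f$.

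First I would record the exposed-face description of the hypotheses. Unwinding Definition~\ref{def:polycomplex}, the relation $C \leq C'$ supplies a hyperplane $\{\ell = 0\}$ for an affine functional $\ell : \mathbb{R}^m \to \mathbb{R}$, oriented so that $\ell \geq 0$ on $C'$ and $C = \{x \in C' : \ell(x) = 0\}$ (with $\ell \equiv 0$ covering the improper case $C = C'$); likewise $D \leq D'$ gives $\mu : \mathbb{R}^n \to \mathbb{R}$ with $\mu \geq 0$ on $D'$ and $D = \{y \in D' : \mu(y) = 0\}$. Because $f$ is affine on the cell $C'$, I can extend $f|_{C'}$ to a \emph{global} affine map $\tilde f : \mathbb{R}^m \to \mathbb{R}^n$ agreeing with $f$ on $C'$; then $\mu \circ \tilde f$ is a genuine affine functional on $\mathbb{R}^m$, and I set $\phi := \ell + \mu \circ \tilde f$.

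The core computation is then routine. For any $x \in E'$ we have $x \in C'$ and $f(x) \in D'$, so $\ell(x) \geq 0$ and $\mu(f(x)) \geq 0$, whence $\phi(x) \geq 0$; moreover $\phi(x) = 0$ forces both summands to vanish, which by the descriptions of $C$ and $D$ is equivalent to $x \in C$ together with $f(x) \in D$, i.e. $x \in E$. Thus $E = \{x \in E' : \phi(x) = 0\}$, the hyperplane $H := \{\phi = 0\}$ supports $E'$, and, since $\phi$ is affine and nonnegative on $E'$, its zero set meets $E'$ only along the boundary, so $E$ contains no interior point of $E'$. This is exactly the definition of $E \leq E'$ in the intersection complex.

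Two bookkeeping points close the argument, and they are where the only real care is needed. The degenerate cases must be separated: if $E = \emptyset$ the relation $\emptyset \leq E'$ is immediate, and if $C = C'$ and $D = D'$ then $\phi \equiv 0$ and $E = E'$, handled by reflexivity of $\leq$; in every remaining case $\phi$ vanishes on the nonempty $E$ yet is positive somewhere on $E' \setminus E$, hence is non-constant and $H$ is a bona fide codimension-one hyperplane. I expect the main obstacle to be conceptual rather than computational: one must observe that $f$ being affine only \emph{cellwise} is precisely enough, since $E' \subseteq C'$ guarantees $\mu \circ f$ and $\mu \circ \tilde f$ coincide throughout the relevant region, so the single global functional $\phi$ legitimately exposes $E$ as a face of $E'$ in the ambient $\mathbb{R}^m$.
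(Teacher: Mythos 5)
The paper never proves Lemma~\ref{lem:facerelations} itself: it imports the statement (rewritten notationally) from \cite{grunert}, Lemma~2.4, so your argument must be judged on its own terms rather than against an in-paper proof. On those terms your approach is the right one and the main line is sound. Under Definition~\ref{def:polycomplex} faces are exactly exposed faces, and a short convexity argument (a segment between points of $C'$ strictly on opposite sides of $H$ would force $H$ to meet the relative interior of $C'$) shows that the defining hyperplanes can indeed be oriented so that $\ell \geq 0$ on $C'$ and $\mu \geq 0$ on $D'$, as you assume; extending $f|_{C'}$ to a global affine $\tilde f$ is legitimate because an affine map on the affine span of $C'$ extends to all of $\mathbb{R}^m$; and the identity $E=\{x \in E' : \phi(x)=0\}$ for $\phi = \ell + \mu\circ\tilde f \geq 0$ on $E'$ is exactly the standard mechanism for exposing the intersection face, with cellwise affineness of $f$ sufficient for the reason you give.

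The one genuine flaw is in your closing case analysis. It is not true that ``in every remaining case $\phi$ \dots is positive somewhere on $E'\setminus E$'': the degenerate situation $E=E'$ can occur even when $(C,D)\neq(C',D')$. For instance take $m=n=1$, $f=\mathrm{id}$, $C=\{0\} \leq C'=[0,\infty)$, and $D=D'=\{0\}$; then $E=E'=\{0\}$, $\phi(x)=x$ vanishes identically on $E'$, $E'\setminus E=\emptyset$, and $H=\{\phi=0\}$ meets $E'$ in all of $E'$, which by the paper's convention that a point's interior is the point itself contains an interior point of $E'$, so $H$ witnesses no face relation and your claimed dichotomy fails. The repair is immediate: key the split not on whether $C=C'$ and $D=D'$ but on whether $\phi$ vanishes identically on $E'$. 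If it does, nonnegativity of both summands forces every $x\in E'$ into $E$, so $E=E'$ and reflexivity of $\leq$ applies; if it does not, then $\phi>0$ somewhere on $E'$, hence $\phi$ is nonconstant on the affine span of $E'$, $H$ is a genuine codimension-one hyperplane, and your relative-interior argument (extend a segment from a point with $\phi>0$ slightly past any putative interior zero, using $\phi\geq 0$ on $E'$) goes through verbatim. With that substitution, together with your already-correct handling of $E=\emptyset$, the proof is complete.
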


The notion of "transversality on cells," defined in \cite{gloriginal}, will be critical for the next section. 

\begin{defn}[\cite{gloriginal}, Definition 4.5 ]\label{def:transverseoncells}
	Let $X$ be a polyhedral complex of dimension $d$ in $\R^n$, let $f:|X|\to \R^{r}$ be a map which is smooth on all cells of $X$ and let $Z$ be a smoothly embedded submanifold (without boundary) of $\R^r$. We say $f$ is \textbf{transverse on cells of $X$} to $Z$ and write $f\pitchfork_X Z $ if the restriction of $f$ to the interior $C^\circ$ of every $k$-cell $C$ of $X$ is transverse to $Z$ when $0 \leq k \leq d$ 
\end{defn}

Lastly, we will need the following notions of \textit{generic} regarding hyperplane arrangements and neural networks, respectively. 

\begin{defn}[\cite{gloriginal} Definitions 2.7, 2.9] \label{def:generic}
	A hyperplane arrangement in $\mathbb{R}^n$ is called \textbf{generic} if each all sets of $k$ hyperplanes intersect in an affine space of dimension $n-k$.  
	A neural network is called \textbf{generic} if all of its affine maps $A_i$ have generic corresponding hyperplane arrangements, $R^{(i)}$. 
\end{defn}

In \cite{gloriginal} it is established that the union of bent hyperplanes of $\mathcal{C}(F)$ form the $(n_0-1)$-faces of $\mathcal{C}(F)$ with probability 1. In the next section we expand on this characterization for lower-dimensional subcomplexes.

The reader is referred to \cite{ghrist_elementary} for a brief background in algebraic topology, especially the notions of homology and cohomology, chain complexes, and duality (sections 4-6).

\subsection{New Definitions and Proofs}

We begin by defining a key additional condition on neural networks, which is necessary for many of the results in this section to hold. 

\begin{defn}\label{def:supertransversal}
	Let $F$ be a ReLU neural network of depth $m$. Let $\textbf{F}^{(i)}:\R^{n_i}\to \R$ be the neural network defined by the last $m-i$ layers of $F$ as in definition \ref{def:lastlayer}. Suppose, for all $1\leq i\leq n$, $F_i$ is transverse on cells of $R^{(i-1)}$ to the interior of all cells of $\mathcal C(F^{(i)})$. Then we call $F$ a \textbf{supertransversal} neural network.
\end{defn}

The condition of network supertransversality is stronger than the notion of network transversality in \cite{gloriginal} (Definition 8.2), but it still holds on a full-measure subset of parameter space, as we  show here.

% GENERICITY

\begin{lemma}
	\label{lem:generic}
	Supertransversality is full measure in $\mathbb{R}^P$, where $P$ is the set of network parameters. 
\end{lemma}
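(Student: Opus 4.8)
The plan is to deduce Lemma \ref{lem:generic} from the parametric transversality theorem (the parametrized form of Sard's theorem) together with a downward induction on the layers; throughout I would restrict to generic networks, which is legitimate since genericity is already known to be full measure. The first step is to observe that supertransversality (Definition \ref{def:supertransversal}) is a finite conjunction of one condition per layer $i$, and that the $i$th condition --- transversality on cells of the layer-$i$ domain arrangement (Definition \ref{def:harrangement}) of $F_i = \relu \circ A_i$ to the interiors of the cells of $\mathcal{C}(F^{(i)})$ --- depends only on the parameters $(W_i,b_i)$ of layer $i$ and on the parameters of the strictly later layers, since the latter determine $\mathcal{C}(F^{(i)})$. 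This upper-triangular dependence is what I would exploit: writing $G_{>i}$ for the set of downstream parameters on which all later conditions hold, $\mathcal{C}(F^{(i)})$ is generic, and (a point I return to below) the affine flats spanned by its cells are in general position with respect to the finitely many coordinate subspaces of $\R^{n_i}$, I would fix a downstream parameter in $G_{>i}$, freezing the target complex, and prove that the set of layer-$i$ parameters satisfying the $i$th condition is full measure. Assuming inductively that $G_{>i}$ is full measure, Fubini's theorem then assembles the conditional statements into full measure for the entire conjunction on $\R^P$.

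The core is the single-layer claim against a fixed target. The difficulty is that $F_i$ is only piecewise affine and its domain arrangement $R^{(i)}$ itself moves with $(W_i,b_i)$, so I would stratify the domain by the sign pattern $\tau \in \{-1,0,1\}^{n_i}$ of $A_i$: on the locus with signs $\tau$, $F_i$ agrees with the globally affine map $x \mapsto P_\tau A_i(x)$, with $P_\tau$ the projection onto the active coordinate subspace $V_\tau$, restricted to the flat on which the required coordinates vanish. Since there are finitely many patterns and finitely many target cells $D$, the $i$th condition becomes a finite conjunction of transversality statements for honest affine maps, and for each pair $(\tau,D)$ transversality obeys a dichotomy: either $V_\tau + \operatorname{dir}(D) = \R^{n_i}$, in which case varying $b_i$ submerses onto $V_\tau$ and the parametric transversality theorem yields transversality for almost every bias; or $V_\tau + \operatorname{dir}(D) \neq \R^{n_i}$, in which case the general-position property built into $G_{>i}$ forces $V_\tau$ and the flat of $D$ to be disjoint, so the stratum image (which lies in $V_\tau$) cannot meet $D$ and transversality holds vacuously. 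Taking the finite union of these measure-zero bad sets keeps the layer-$i$ bad set measure zero.

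The hard part will be the bookkeeping for the lower-dimensional cells of $R^{(i)}$, whose defining flats $L_\tau \subseteq \R^{n_{i-1}}$ also vary with the parameters, so that the affine map being tested changes in both its linear part and its domain. Indexing by the finitely many sign patterns rather than by the (parameter-dependent) actual cells is what controls this, since each pattern produces a single affine family over parameter space to which the parametric theorem applies; generic weights $W_i$ then guarantee the linear parts $P_\tau W_i|_{\operatorname{dir}(L_\tau)}$ have maximal rank, so the spanning half of the dichotomy is decided by a non-vanishing-minor condition and the disjointness half again follows from the general-position clause in $G_{>i}$. Once the single-layer claim is secured the induction closes. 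Finally, I would observe that every defining condition is either a non-vanishing of finitely many minors or a non-meeting of disjoint flats, both of which are open; this upgrades the conclusion to the openness and fiberwise-genericity statements recorded in the introduction, for which one fixes the weights and varies only the biases.
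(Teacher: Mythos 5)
Your overall architecture --- Fubini over the upper-triangular parameter dependence, stratification of the piecewise-affine layer map by sign patterns $\tau$, and parametric transversality with the biases $b_i$ as the parameter --- is sound for the ``spanning'' half of your dichotomy (indeed, since varying the active biases already submerses each stratum image onto $V_\tau$, your extra maximal-rank condition on $P_\tau W_i$ is not even needed). This is genuinely different from the paper's proof, which instead inducts backwards through layers and repairs any nontransversality by an arbitrarily small \emph{generic translation of the downstream complex} $\mathcal{C}(F^{(k)})$, using the minimal distance $\delta$ between disjoint pairs of cells to guarantee no new incidences are created. But your proposal has a genuine gap in the ``vacuous'' half: the general-position clause you build into $G_{>i}$ --- that the affine hull of every cell $D$ of $\mathcal{C}(F^{(i)})$ is disjoint from every coordinate subspace $V_\tau$ whenever $T(D)+V_\tau \neq \R^{n_i}$ --- is never proved. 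It is not a consequence of supertransversality of the later layers, and your induction never re-establishes it: your inductive step shows the layer-$i$ transversality condition holds for almost every $(W_i,b_i)$, but the \emph{next} step of the induction needs the general-position clause for the newly formed complex $\mathcal{C}(F^{(i)})$, whose cell flats depend on $(W_i,b_i)$, and this is a separate full-measure assertion of exactly the same character as the lemma you are trying to prove. You flag it (``a point I return to below'') but the return consists only of re-invoking it.

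The clause is load-bearing, not bookkeeping: because the image of each stratum is trapped inside the fixed coordinate subspace $V_\tau$, and varying $b_i$ only translates that image \emph{within} $V_\tau$, Sard-type arguments in the layer-$i$ parameters are powerless precisely when $V_\tau + T(D) \neq \R^{n_i}$. Concretely, take $n_i = 3$, let $V_\tau$ be a coordinate $2$-plane, and suppose a $1$-cell $D$ of the downstream complex lies inside $V_\tau$ (nothing in your hypotheses forbids this): then a $2$-dimensional stratum image in $V_\tau$ meets $D$ for a positive-measure set of biases $b_i$, each intersection point is nontransverse in $\R^{3}$, and no choice of layer-$i$ parameters can repair it. The repair must perturb the \emph{downstream} parameters, which is exactly the paper's device: a generic translation of $\mathcal{C}(F^{(i)})$ through the downstream bias works because, for a fixed flat $A$ and subspace $V$ with deficient span, the bad set $\{t : (A+t)\cap V \neq \emptyset\} = V - A$ is a proper affine subspace, and the finitely many such bad sets have measure zero. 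If you add this translation argument as a second clause of your inductive step --- establishing the general-position property of $\mathcal{C}(F^{(i)})$ for almost every $(W_i,b_i)$ before descending --- your Fubini/Sard scheme closes; as written, it does not.
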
 
\begin{proof}
	First, a single-layer neural network $F^{(m)}: \mathbb{R}^{n_m} \to \mathbb{R}$ is trivially supertransversal; $\mathbb{R}$ has one cell which is already full dimension. 
	
	Next suppose that $F^{(k)}$ is supertransversal, and let $F_{k-1}: \mathbb{R}^{n_{k-1}}\to \mathbb{R}^{n_k}$ be a network layer. 
	
	Suppose it is the case that $F_{k-1}$ is nontransverse on some cell $R$ of $R^{(i-1)}$ to some cell $C$ of $\mathcal{C}(\textbf{F}^{(k)})$. If so, it must be the case that $F_{k-1}(R)\cap C$ is nonempty and $T(F_{k-1}(R)) \oplus T(C) \neq \mathbb{R}^{n_{k}}$. Call $T(F_{k-1}(R)) \oplus T(C)$ by $T_{R,C}$. If $T_{R,C} \neq \mathbb{R}^{n_{k-1}}$, then it is instead a vector subspace of less than full rank, and an affine translation of $\mathcal{C}(F^{(k)})$ by any vector in $\mathbb{R}^{n_{k}} - T_{R,C} $ will ensure  $F_{k-1}(R)\cap C$ is subsequently empty, since $F$ is affine on $R$ and $C$ contained in an affine subspace of $\mathbb{R}^{n_k}$. 
	
	Let $\delta_{R,C}$ be the minimum distance between pairs of points in $F_{k-1}(R)$ and $C$. Since these cells are closed (though not necessarily compact), this is well defined, and furthermore if $R \cap C =\emptyset$, then $  \delta_{R,C}>0$. Let $\delta= \{\min \delta_{R,C}: F_{k-1}(R)\cap C =\emptyset \}$. Then $\delta>0$. 

	Since there are finitely many cells $R$ and $C$, the set $$\mathbb{R}^{n_{k-1}} - \bigcup_{R,C} T_{R,C} $$ is generic in $\mathbb{R}^{n_{k-1}}$, where the union is taken over only those cells where $T_{R,C}$ is not full rank. An affine translation of $\mathcal{C}(F^{(k)})$ by any vector in this set with magnitude greater than 0 but less than $\delta$ yields a supertransversal network. 
	
	Since this can be performed regardless of the weights and biases of $F_k$, and an affine translation of the input space of $F^{(k)}$ does not change its supertransversality properties, the network $F_{k-1} \circ F^{(k)}$ is supertransversal on a full-measure subset of parameter space, which completes our inductive step.
\end{proof}

The combinatorial characterization of $\mathcal{C}(F)$ is through the following combinatorial construction called \textit{sign sequences}. The primary use of these sign sequences is to track face relations. 

\begin{defn}\label{def:signsequence}
	Define $s: \mathcal{C}(F) \to \{-1,0,1\}^N$ by $s_{ij}(C) = \sgn(F_{ij}(C))$. We call $s(C)$ the \textbf{sign sequence} of the cell $C$.
\end{defn}

This construction is used in the theory of oriented matroids and hyperplane arrangements, cf. \cite{hyperplanes}, and in particular the construction may be used to identify polyhedra in an affine hyperplane arrangement by denoting which halfspaces and hyperplanes were intersected to form that region. However, any analogous properties for ReLU networks must be proven, as many of the properties below fail to hold for arbitrary PL manifold arrangements. We must show that the construction still provides a combinatorial description of the polyhedra of the network:

\begin{theorem}\label{thm:injective}
	The function $s$ is well-defined and injective on cells of $\mathcal{C}(F)$.
\end{theorem}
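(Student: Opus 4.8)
The plan is to prove two things separately: that $s$ is well-defined (depends only on the cell $C$, not on a choice of representative point), and that it is injective (distinct cells receive distinct sign sequences). For well-definedness, the key observation is that $F_{ij}$ is affine-linear on each cell $C \in \mathcal{C}(F)$, since $\mathcal{C}(F)$ is by construction the common refinement on which every $F_{(i)}$, and hence every node map $F_{ij}$, is affine. An affine function restricted to the relative interior of a polyhedron cannot change sign — if it took both a strictly positive and a strictly negative value on $C^\circ$, then by connectedness and continuity it would vanish at an interior point, and affine-linearity would force a whole relative-interior hyperplane of zeros, contradicting that $F_{ij}^{-1}(0)$ meets $C$ only in a face of $C$ (this is exactly the polyhedral-complex structure: bent hyperplanes are unions of cells, so $C^\circ$ is either entirely in the zero set or entirely disjoint from it). Thus $\sgn(F_{ij}(x))$ is constant as $x$ ranges over the relative interior of $C$, so $s(C)$ is well-defined by evaluating on any interior point (and by convention a vertex is its own interior).

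**For injectivity**, suppose $C$ and $D$ are cells with $s(C) = s(D)$; I would show $C = D$. The natural route is to recover $C$ from its sign sequence as an intersection of closed regions. Fix a point $x \in C^\circ$. For each $(i,j)$ the sign $s_{ij}(C)$ records on which side of (or on) the bent hyperplane $F_{ij}^{-1}(0)$ the interior of $C$ lies. I would argue that $C$ is precisely the closure of the set of points $y$ whose strict signs $\sgn(F_{ij}(y))$ agree with $s_{ij}(C)$ at every $(i,j)$ — that is, the sign sequence pins down exactly the relative-interior cell in the arrangement of all bent hyperplanes, and $\mathcal{C}(F)$ is by construction the polyhedral complex induced by this arrangement (each cell is an intersection of preimages of cells $R^{(i)}$, which are themselves defined by the signs of the coordinate node maps). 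Two cells with identical sign sequences therefore have identical relative interiors, and since each cell is the closure of its relative interior, $C = D$.

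**The main obstacle** is making rigorous the claim that the sign sequence determines the cell — the step hiding the real content. One must verify that the set $\{y : \sgn(F_{ij}(y)) = s_{ij}(C) \text{ for all } (i,j)\}$ is nonempty, connected, and equal to a single relative-interior of $\mathcal{C}(F)$, rather than possibly splitting into several components. Here the iterative Forward Construction of Definition \ref{def:canonicalcomplex} is the right tool: by induction on layers, a cell of $\mathcal{C}(F_{(k)})$ is an intersection $C' \cap F_{(k-1)}^{-1}(R)$ with $C' \in \mathcal{C}(F_{(k-1)})$ and $R \in R^{(k)}$, and $R$ is itself determined by the signs of the layer-$k$ node maps. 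So the sign data at layer $k$, together with the inductively-determined cell $C'$ of the previous complex, determines $R$ and hence the cell, completing the induction. I expect the connectedness/single-component point to require the convexity of each cell (polyhedra are convex, being intersections of halfspaces as in Definition \ref{def:polycomplex}), which rules out a sign sequence corresponding to a disconnected or self-overlapping locus.

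**It is worth noting** that genericity and supertransversality are \emph{not} needed for this theorem — well-definedness follows purely from affine-linearity on cells, and injectivity from the polyhedral-complex structure of $\mathcal{C}(F)$, both of which hold for arbitrary networks. The stronger hypotheses enter only in the subsequent results (Lemma \ref{lem:countzeros}, Theorem \ref{thm:cubicalcomplex}) where one needs the zero-count to match codimension, which can fail on the measure-zero non-generic locus.
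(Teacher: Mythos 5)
Your proposal is correct and takes essentially the same route as the paper: the paper likewise gets well-definedness from the fact that differing signs at $(i,j)$ force the images under $F_{(i-1)}$ into different cells of $R^{(i)}$, and proves injectivity by exactly the layerwise induction you sketch under ``the main obstacle'' --- signs at layer $k$ determine the cell of the hyperplane arrangement $R^{(k)}$, and together with the inductively known cell of $\mathcal{C}(F_{(k-1)})$ this determines the cell of $\mathcal{C}(F_{(k)})$ (the paper cites \cite{grunert}, Lemma 2.5, for uniqueness of the intersected cell, which is the one point you leave implicit; note also that the real input is the hyperplane-arrangement property at each layer rather than convexity per se, as the paper's Figure \ref{fig:noninjectivity} shows injectivity fails for general PL arrangements of convex-cell complexes). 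Your observation that genericity and supertransversality are not needed also matches the paper, whose statement and proof of Theorem \ref{thm:injective} invoke neither hypothesis.
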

\begin{proof}
	To see that $s$ is well-defined, suppose $x_1, x_2 \in \R^{n_0}$ are such that $\sgn (F_{ij}(x_1)) \neq \sgn( F_{ij}(x_2))$ for some $i,j$. We wish to show that $x_1$ and $x_2$ are not in the same cell of $\CF$. However, we see that the images $F_{{i-1}}\circ ... \circ F_1(x_1)$ and $F_{i-1}\circ ... \circ F_1(x_2)$ cannot be in the same cell of of $R^{(i)}$ (the induced polyhedral decomposition of $\R^{n_{i-1}}$ by $\textbf{A}_i$), because they differ in their location relative to the $j$th hyperplane. Thus $x_1$ and $x_2$ are in different cells of $\mathcal{C}(F_{(i)})$. As $\mathcal{C}(F)$ is a further polyhedral subdivision of $\mathcal{C}(F_{(i)})$, $x_1$ and $x_2$ are in different cells of $\mathcal{C}(F)$. So, $s$ is well defined.

	Next, suppose $C_0$ and $C_1$ are cells such that $s(C_0)=s(C_1)$. Let $x_0 \in C_0$ and $x_1 \in C_1$. We wish to show $C_0 = C_1$. We proceed by induction on layers in the forward direction.

	We show first that, as a base case for induction, $x_0$ and $x_1$ are in the same cell in $\mathcal{C}(F_1)$. 
	
	Indeed since $s(C_0)=s(C_1)$, $x_0$ and $x_1$ are contained in the same cell of $R^{(1)}$, following the corresponding fact for hyperplane arrangements, 
	this immediately means that $x_0$ and $x_1$ are in the same cell of $\mathcal{C}(F_1)$.
	
	Now suppose as an inductive hypothesis that $x_0$ and $x_1$ are in the same cell of $\mathcal{C}(F_{(k)})$.
	Call $y_0 = F_{(k)}(x_0)$ and $y_1 = F_{(k)}(x_1)$. 
	Because $\sgn(F_{(k+1)j}(x_1))=\sgn(F_{(k+1)j}(x_2))$ for all $0 \leq j \leq n_{k}$, this implies that $y_0$ and $y_1$ are in the same intersection of halfspaces and hyperplanes in the co-oriented hyperplane arrangement $\textbf{A}_{k+1}$, that is, the same cell of $R^{(k+1)}$. Therefore, as $x_1$ and $x_2$ are in the same cell $C$ of  $\mathcal{C}(F_{(k)})$ and their image is in the same cell $C'$ of $R^{(k+1)}$ we conclude $x_0$ and $x_1$ are in the same cell in $\mathcal{C}(F_{(k+1)})$ given by 
	
	$$C \cap (F_{(k+1)})^{-1}(C')$$
	
	That this is a unique polyhedral cell in $\mathcal{C}(F_{(k+1)})$, follows the work in \cite{grunert}, Lemma 2.5. By induction, as $F$ is composed of finitely many layers, $x_0$ and $x_1$ are in the same cell of $\CF$. 
\end{proof}

\begin{figure}[h]
	\centering 

	\caption{Even if $M_i$ are codimension-1 connected co-orientable PL manifolds embedded in $\mathbb{R}^n$ whose intersection subdivides $\mathbb{R}^n$ into polyhedral regions, and the resulting polyhedral subdivision is dual to a cubical complex, it is possible that labeling each region by its location relative to the co-orientation of those manifolds fails to be injective. This example depicts 3 PL submanifolds in $\mathbb{R}^2$ whose embedding has the aforementioned properties, but there are too many cells (14 vertices, 30 edges, and 17 2-gons) to be labeled by the 27 possible labelings in $\{-1,0,1\}^3$. } 
	\includegraphics[width=2in]{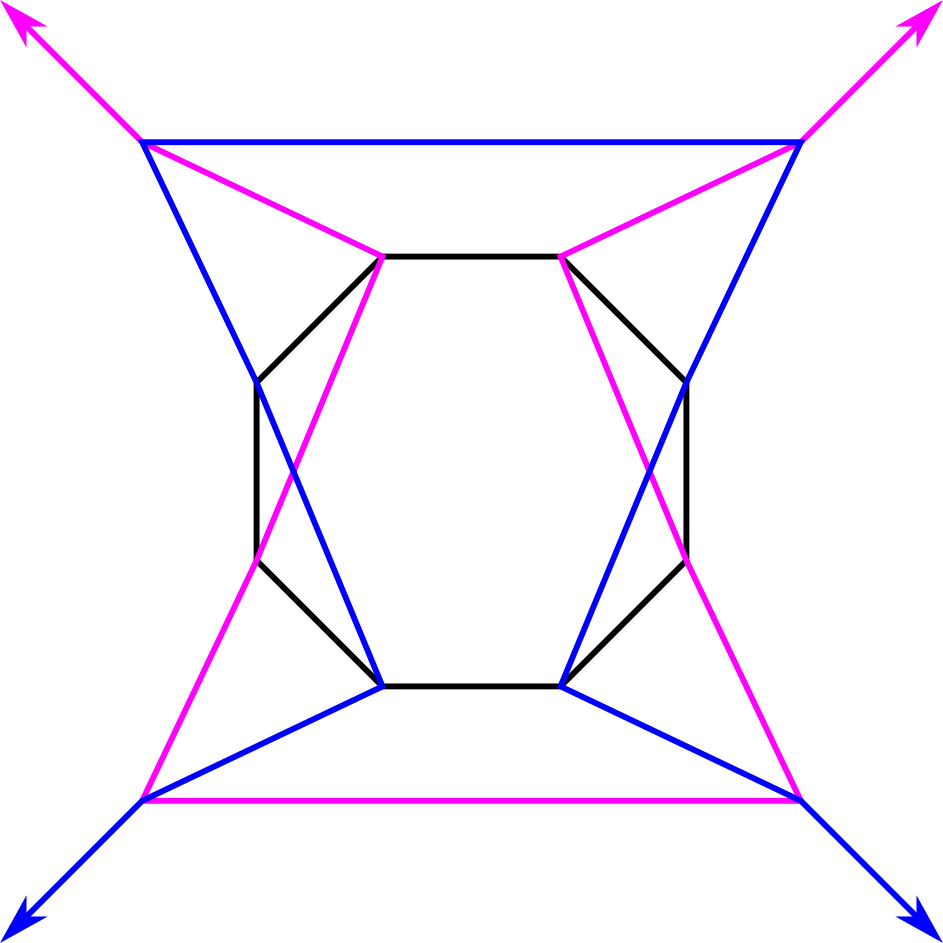}
	\label{fig:noninjectivity}
\end{figure} 

The injectivity of $s$ is special to constructions arising from hyperplane arrangements, and not general manifold arrangements. Indeed, even if a set of codimension-1 PL submanifolds subdivide $\mathbb{R}^n$ in a way which this injectivity fails; see Figure \ref{fig:noninjectivity}

Below we show that C(F) can be reconstructed from vertices and sign sequences. It is more traditional to work "top down," considering the top dimensional polytopes and their facets. But unlike theories such as hyperplane arrangements and oriented matroids, there is no guarantee that knowing the sign sequences of the top-dimensional regions allows one to deduce the sign sequences of the zero-dimensional regions (circuit-cocircuit duality does not hold). The following example is an explicit illustration of this fact.  

\begin{theorem}
	\label{thm:topdimbad}
	There exists a pair of networks $F_1$ and $F_2$ such that the set of strings encoding the activation patterns in the interiors of the cells of $\mathcal{C}(F_1)$ and $\mathcal{C}(F_2)$ are equal, but the polyhedral complexes $\mathcal{C}(F_1)$ and $\mathcal{C}(F_2)$ are not combinatorially equivalent.
\end{theorem}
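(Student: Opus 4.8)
The plan is to produce an explicit pair of two-layer networks in input dimension $n_0 = 2$. Depth at least two is essential here: for a single flat hyperplane arrangement the covectors of the top-dimensional cells already determine the entire oriented matroid, hence the face poset, so the phenomenon cannot arise until genuine bending of hyperplanes is present. I would take both networks generic and supertransversal, so that by Theorem \ref{thm:injective} each full-dimensional region is labeled by a unique string in $\{-1,1\}^N$ and the set of interior activation patterns is precisely the set of these top-cell sign vectors. The point of the construction is to exhibit that this set of top-cell labels decouples from the incidence structure once bending is allowed.

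Concretely, I would fix a common first layer, producing a fixed arrangement of straight lines in $\mathbb{R}^2$, and then choose two different second-layer parameter sets $F_1$ and $F_2$ whose additional node map has a bent-hyperplane zero set that folds across the first-layer lines in two combinatorially distinct ways. The folds are tuned so that the collection of $2$-cells carries identical sign-vector labels in both complexes, verified by directly evaluating the sign of each node map on each region, while the geometry of the bent curve differs. The key leverage, unavailable for flat arrangements, is that a bent hyperplane can be positioned so that a potential codimension-one adjacency between two regions whose labels differ in a single coordinate is either realized or blocked, without adding or deleting any top-cell label.

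To certify combinatorial inequivalence I would lead with a cell-count invariant. Since the label sets of the $2$-cells agree, the two complexes have the same number of top cells; it then suffices to show that the number of $1$-cells or $0$-cells differs, which forces the face posets to be non-isomorphic. The mechanism is that in one network the bent hyperplane produces an extra crossing (hence an extra vertex and incident edges) relative to the other. If a candidate pair happens to share all cell counts, I would instead distinguish the abstract incidence posets directly, using the face-product characterization of Lemma \ref{lem:faceproperties} to read off which pairs of cells satisfy $S(C)\cdot S(D)=S(D)$ and confirm that the resulting face relations are not matched by any cell bijection.

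The main obstacle is the tension between the two requirements: small perturbations of the bend tend either to preserve both the label set and the incidences, or to alter the label set itself. The delicate part is to slide the corner of the bent hyperplane relative to the first-layer regions so that it crosses exactly the same collection of flat regions, keeping every top-cell label intact, yet threads through them with a different incidence pattern. Once such a pair is identified, the remaining work is routine bookkeeping: checking genericity and supertransversality, confirming that both are valid canonical polyhedral complexes, and tabulating the $2$-cell labels to verify that the two sets coincide.
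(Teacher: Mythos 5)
Your proposal takes essentially the same approach as the paper: the paper's proof likewise exhibits an explicit pair of two-layer networks $\mathbb{R}^2 \to \mathbb{R}^4 \to \mathbb{R}$ in which the bent hyperplane coming from the final layer (the decision boundary) crosses exactly the same collection of regions in both complexes—so the seventeen top-cell sign sequences coincide—but threads through them with different incidence, producing a bounded decision boundary in one network and an unbounded one in the other, hence non-isomorphic face posets (Figure \ref{fig:topregions}). The only difference is one of completeness, not of method: the paper pins down a concrete instance (with explicit weights in the accompanying code), whereas you describe the tuning of the fold but stop short of instantiating it.
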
 

\begin{figure}[H]
	\centering
	\caption{The two canonical polyhedral complexes pictured below have differing combinatorics and differing topology of their decision boundaries, but the set of sign sequences of the top dimensional regions is equal (see Theorem \ref{thm:topdimbad}). Explicit weights and biases for this construction are available in the code provided at the online repository.}
	\label{fig:topregions}
	
	\includegraphics[width=0.4\linewidth]{"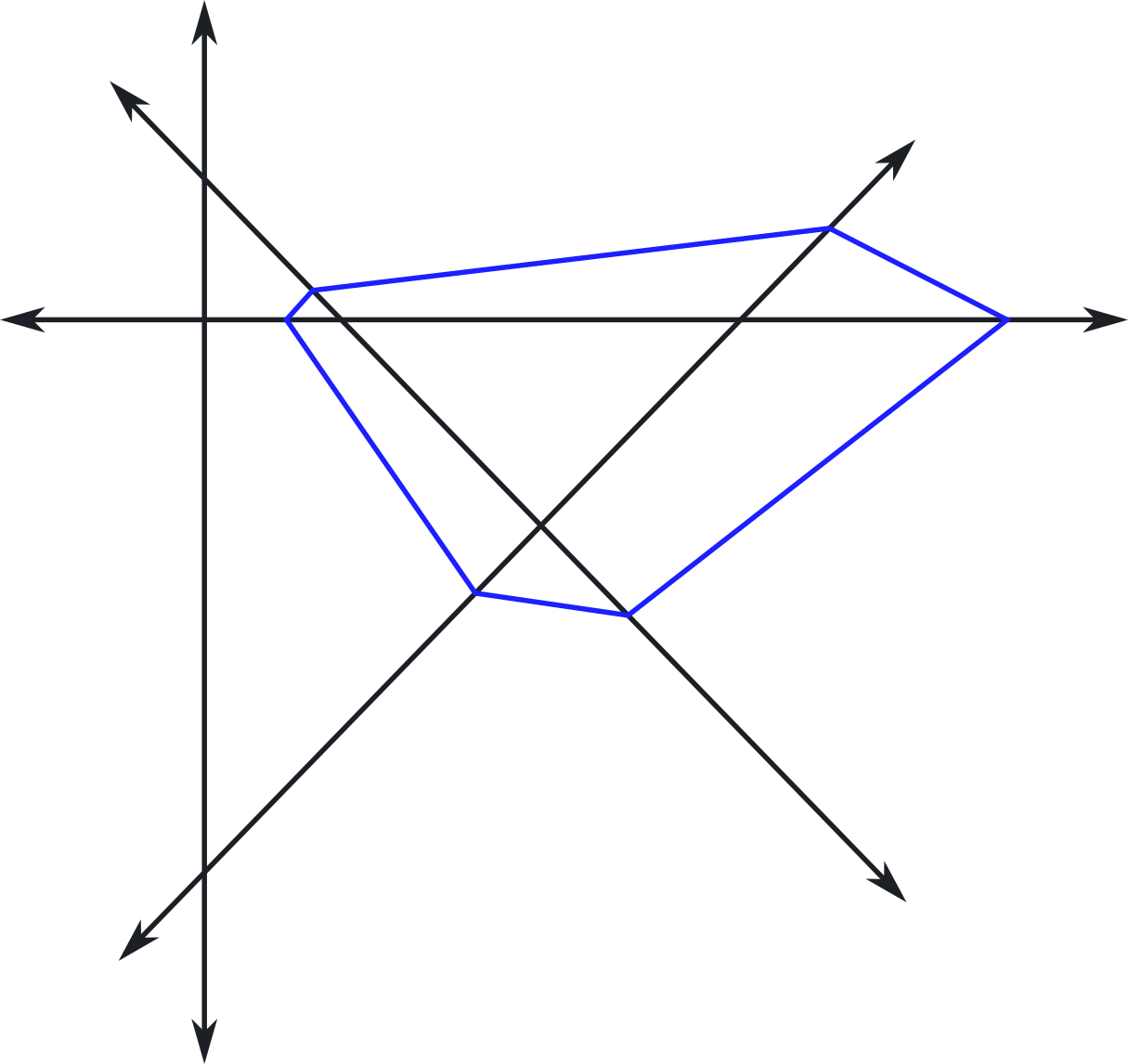"} \includegraphics[width=0.4\linewidth]{"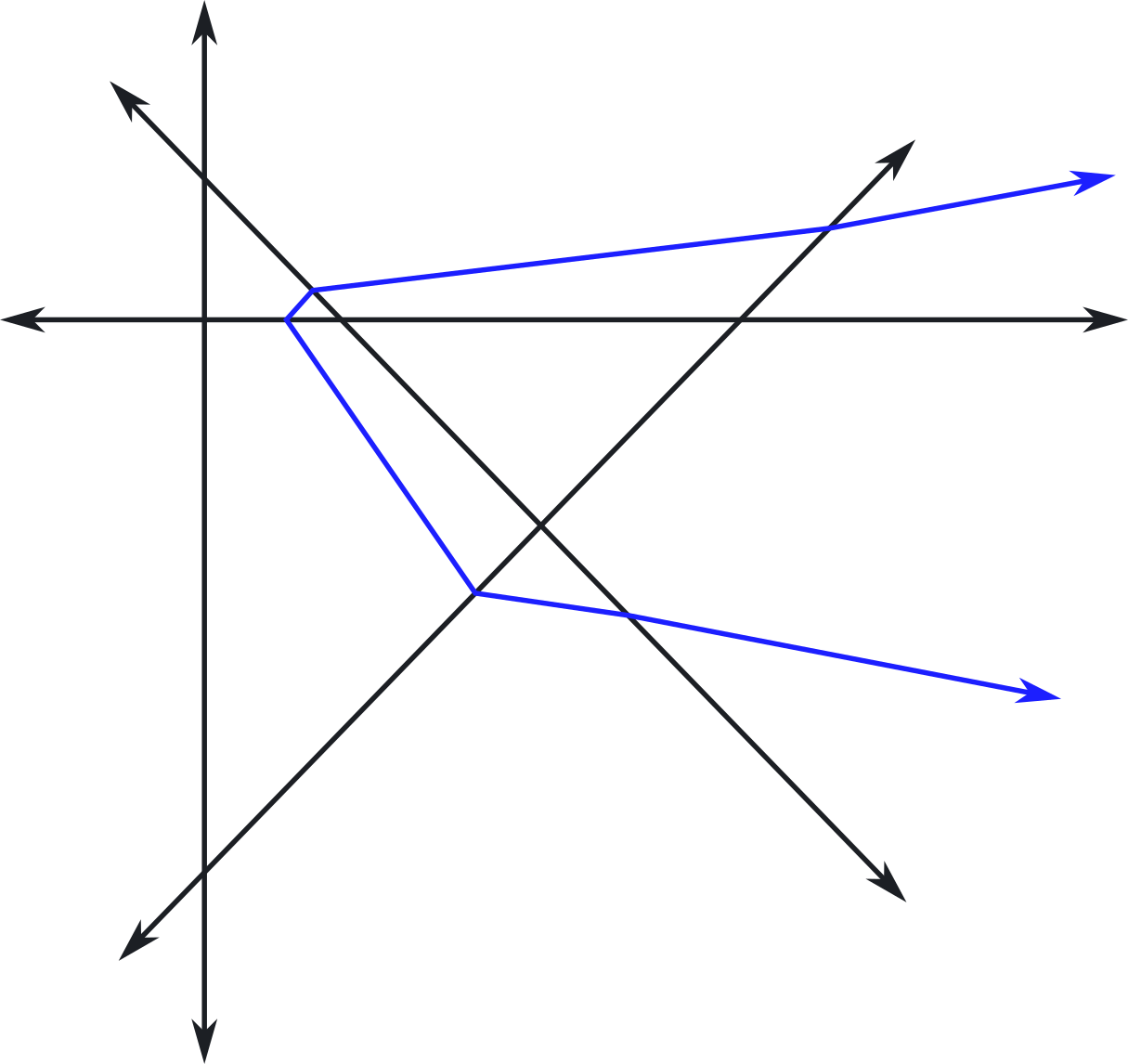"}
\end{figure} 

\begin{proof}
	We provide an example in Figure \ref{fig:topregions}.  Two neural networks $F_1: \R^2 \to \R^4 \to \R$ and $F_2: \R^2 \to \R^4 \to \R$ have the pictured canonical polyhedral complexes. One has a bounded decision boundary, whereas the other is unbounded. However, both have identical  sign sequences of their top-dimensional regions, given by: 
	
	\begin{tabular}[H]{ccccc}
		(-1, 1, -1, 1, 1), & (-1, 1, 1, -1, 1), &(1, -1, -1, -1, 1), &(-1, 1, -1, 1, -1), &(-1, -1, 1, -1, 1), \\
		(-1, 1, 1, 1, -1),& (1, -1, -1, 1, -1), &(1, -1, 1, -1, 1), &(-1, -1, 1, 1, -1), &(-1, 1, 1, 1, 1), \\
		(-1, -1, -1, 1, -1),& (1, -1, -1, 1, 1),& (-1, -1, 1, 1, 1),& (1, -1, 1, 1, -1), &(1, -1, 1, 1, 1), \\
		(1, 1, -1, 1, -1), &(1, 1, -1, 1, 1) & & & 
	\end{tabular} 

That these canonical polyhedral complexes have the same sign sequences of their top-dimensional regions is more easily seen by looking at the differences between the two pictures, which only depend on the blue "decision boundary." Each region which is subdivided into two regions by the blue bent hyperplane in the left image is also subdivided by the blue bent hyperplane in the right image.
	
\end{proof}

As illustrated in \cite{gloriginal}, it is not always the case that the preimages $F_{ij}^{-1}(0)$ are $n_0-1$ dimensional. In order to establish dimension, supertransversality and genericity are key.  

% COUNTING ZEROS
\begin{lemma}\label{lem:countzeros}
	Let $F$ be generic and supertransversal. Let $C$ be a $k$-cell of $\CF$. Then $s(C)$ has exactly $n_0-k$ entries which are zero. That is, $C$ is contained in the intersection of $n_0-k$ bent hyperplanes.
\end{lemma}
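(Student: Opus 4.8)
The plan is to induct on the number of layers, using the forward construction of $\mathcal{C}(F)$ (Definition \ref{def:canonicalcomplex}). I would prove the strengthened statement that every $j$-cell $D$ of $\mathcal{C}(F_{(k)})$ has exactly $n_0-j$ zeros among the sign-sequence entries $s_{ij}(D)$ coming from layers $i\le k$; taking $k$ to be the last layer then yields the lemma, since the full sign sequence records precisely these entries across all layers.

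For the base case, $\mathcal{C}(F_{(1)})=R^{(1)}$ is the polyhedral complex of a generic arrangement of the first-layer hyperplanes $H_{1j}=F_{1j}^{-1}(0)$ in $\R^{n_0}$. The relative interior of a cell $C$ has a constant sign vector, so the affine span of $C$ equals the intersection of exactly those hyperplanes on which $C$ lies; if there are $z$ of them, genericity (Definition \ref{def:generic}) forces this intersection to have dimension $n_0-z$, which must equal $\dim C$. Hence $z=n_0-\dim C$, as desired.

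For the inductive step, fix a cell $D$ of $\mathcal{C}(F_{(k)})$. Since $\mathcal{C}(F_{(k)})$ subdivides $\mathcal{C}(F_{(k-1)})$ and the layer-$(\le k-1)$ signs are constant on $D^\circ$, the interior $D^\circ$ lies in the interior of a unique cell $C\in\mathcal{C}(F_{(k-1)})$; thus $D$ has the same layer-$(\le k-1)$ zeros as $C$, namely $n_0-\dim C$ by the inductive hypothesis. Writing $D=C\cap F_{(k-1)}^{-1}(R)$ with $F_{(k-1)}(D^\circ)\subseteq R^\circ$ for a cell $R$ of $R^{(k)}$, the new zeros of $D$ are exactly the layer-$k$ node maps vanishing on $R$; by genericity of $R^{(k)}$ this number equals $\ell:=\operatorname{codim}_{\R^{n_{k-1}}} R$. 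It then remains to show $\dim D=\dim C-\ell$, after which the total count is $(n_0-\dim C)+\ell=n_0-\dim D$.

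The dimension count $\dim D=\dim C-\ell$ is where supertransversality (Definition \ref{def:supertransversal}) does the essential work, and I expect it to be the main obstacle. The affine map $F_{(k-1)}$ restricted to the affine span of $C$ must meet the codimension-$\ell$ face $R$ transversally, so that its preimage has codimension exactly $\ell$ inside $C$; supertransversality is precisely the hypothesis guaranteeing this for every cell and every face of the layer-$k$ arrangement. The care needed here is twofold: first, translating the condition as stated into the forward transversality of $F_{(k-1)}$ on cells to $R^{(k)}$; and second, promoting transversality to each individual bent hyperplane into transversality to their codimension-$\ell$ intersection faces, which uses genericity of $R^{(k)}$ so that codimensions add. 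Ruling out the degenerate alternatives, such as a cell's image lying inside a bent hyperplane or meeting it tangentially, is exactly what transversality provides, and is the reason the hypothesis cannot be dropped.
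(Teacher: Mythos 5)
Your overall engine---induction on layers, with the zero-count growing by the codimension $\ell$ of the minimal arrangement cell $R$ (genericity giving exactly $\ell$ layer-$k$ hyperplanes through $R$), and transversality supplying the matching dimension drop $\dim D = \dim C - \ell$---is the same engine the paper uses, and your bookkeeping of which node maps contribute new zeros is sound. But the step you flag as the ``main obstacle'' is a genuine gap, not a routine translation. Definition \ref{def:supertransversal} is stated in terms of the \emph{backwards} construction: it requires each \emph{single layer map} $F_i$, restricted to cells of the arrangement in its own domain, to be transverse to the cells of the backward complex $\mathcal{C}(F^{(i+1)})$. It says nothing directly about the composite $F_{(k-1)}$ restricted to cells of the forward complex $\mathcal{C}(F_{(k-1)})$, and the implication you need does not follow by restriction. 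Concretely: if $C \in \mathcal{C}(F_{(k-1)})$ and $R'$ is the cell of $R^{(k-1)}$ whose interior contains $F_{(k-2)}(C^\circ)$, supertransversality yields $T(F_{k-1}(R')) + T(Z) = \R^{n_{k-1}}$ for faces $Z$ of the layer-$k$ arrangement, whereas the composite $F_{(k-1)}|_{C^\circ}$ only has image direction $T(F_{k-1}(F_{(k-2)}(C)))$, and $T(F_{(k-2)}(C))$ can be a proper subspace of $T(R')$; transversality of an affine map on a big cell does not restrict to transversality on a lower-dimensional affine subset of it. Attempting to bridge this with the standard composition criterion ($f \pitchfork Z$ implies $f \circ g \pitchfork Z$ iff $g \pitchfork f^{-1}(Z)$) merely pushes the requirement down through the layers until it lands on cells of $\mathcal{C}(F_{(k-1)})$ sitting inside cells of $R^{(1)}$---a statement of the same kind you are proving, since the direction space of $C$ is itself cut out by preimages of later-layer hyperplanes. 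What your induction actually needs is essentially the (weaker) network-transversality of \cite{gloriginal}, which holds generically but would have to be established separately (e.g., by a measure argument in the style of Lemma \ref{lem:generic}); it cannot simply be cited as ``what supertransversality says.''

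The paper sidesteps this entirely by running the induction in the opposite direction, via the backwards construction on $\mathcal{C}(F^{(i)})$: there each new cell has the form $F_{i-1}^{-1}(C) \cap D$ with $D$ a \emph{full} cell of the single-layer arrangement $R^{(i-1)}$, so the map whose transversality is invoked is literally $F_{i-1}$ restricted to cells of $R^{(i-1)}$---the verbatim content of Definition \ref{def:supertransversal}---and the codimensions then add exactly as in your count ($k + \ell$, with $\ell$ hyperplanes through $D$ by genericity, and no extra hyperplanes since $D$ is minimal). If you reverse your induction accordingly, carrying the strengthened statement on $\mathcal{C}(F^{(i)}) \subset \R^{n_{i-1}}$ instead of on $\mathcal{C}(F_{(k)})$, the rest of your argument goes through essentially unchanged.
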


\begin{proof}
		This is certainly true for any neural network of the form $G \circ F_m$ satisfying the condition that $F_m$ is generic as a layer map, as the bent hyperplane arrangement is equal to the hyperplane arrangement, which is a generic hyperplane arrangement (Definition \ref{def:generic}). 
		
		We proceed via induction, using the backwards construction of $\mathcal{C}(F)$ (Definition \ref{def:canonicalcomplex}). 
		
		Suppose by way of induction $F^{(i)}=G \circ F_m \circ ... \circ F_i$, and that $\mathcal{C}(F^{(i)})$ satisfies the condition that $C \in \mathcal{C}(F^{(i)})$ is a $k$-cell if and only if $C$ is contained in exactly $n_i - k$ bent hyperplanes. 
		
		Now, suppose $F_{i-1}$ is transverse on the cells of $R^{(k-1)}$ to $C$ for all $C$ in $\mathcal{C}(\textbf{F}^{(i)})$. Consider $\textbf{F}^{(i-1)}= G\circ F_m \circ ... \circ F_i\circ F_{i-1}$. 
		
		Let $C'$ be a cell in $\mathcal{C}(\textbf{F}^{(i-1)})$. Note that by definition, $C'$ is given by $F_1^{-1}(C) \cap D$ for some $C \in \mathcal{C}(\textbf{F}^{(i)})$ and some minimal (by inclusion) cell $D$ of $R^{(i-1)}$. In particular, we may assume $C'$ is not contained in any proper face of $D$. If $D$ has codimension $\ell$, then $D$ is in the intersection of exactly $\ell$ hyperplanes in $R^{(i-1)}$ by the genericity of $F$. Because $F_1$ is transverse on cells of $R^{(i-1)}$ to $C$, codim$(C')$ in the interior of $D$ is equal to codim$(C)=k$, with total codimension $k+\ell$. 
		
		As $C$ is contained in the intersection of exactly $k$ bent hyperplanes in $\mathcal{C}(F^{(i)})$, $C'$ is contained in the intersection of the preimage of precisely those same $k$ bent hyperplanes in $\mathcal{B}(\textbf{F}^{(i-1)})$. Furthermore $C'$ is contained in the intersection of the $\ell$ hyperplanes in $R^{(i-1)}$ which intersect to form $D$, and no additional hyperplanes as $C'$ is not contained in any proper face of $D$. Thus $C'$ is contained in the intersection of precisely $k + \ell$ bent hyperplanes in $\mathcal{C}(\textbf{F}^{(i-1)})$ and has codimension $k+\ell$. The number of zeros in $s(C')$ must be equal to the number of  bent hyperplanes it is contained in, by definition. This completes the inductive step.
\end{proof}

% SIGN SEQUENCES OF VERTICES

In order to establish additional properties regarding  the existence of cells satisfying certain relations in supertransversal networks, we rely on the following lemma. 

\begin{lemma}\label{lem:facerelations_transversality}
	Let $f:M\to \mathbb{R}^n$ be a PL map affine on cells of an embedded polyhedral complex $M\subset \mathbb{R}^m$. Let $N$ be a polyhedral complex embedded in $\mathbb{R}^n$. Suppose $f$ is transverse on cells of $M$ to the interior of all cells of $N$. 
	
	If $C \leq C'$ is a face relation in $M$, $D \leq D'$ is a face relation in $N$, and $f(C^\circ) \cap D$ is nonempty, then $f(C'^\circ) \cap D'^\circ$ is nonempty.  
\end{lemma}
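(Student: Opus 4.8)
The plan is to exhibit an explicit witness point of $f({C'}^\circ)\cap {D'}^\circ$ by starting at a point of the boundary intersection we are handed and perturbing it \emph{simultaneously} into the relative interiors of $C'$ and of $D'$. First I would fix $p\in f(C^\circ)\cap D$, write $p=f(x_0)$ with $x_0\in C^\circ$, and let $D''$ be the unique face of $D'$ whose relative interior contains $p$. Since polyhedral complexes are closed under faces, $D''$ is a cell of $N$ with $D''\le D'$, and $p\in (D'')^\circ$. The reason for passing to $D''$ rather than using $D$ directly is that it lets me apply the transversality hypothesis at a point lying in the \emph{interior} of a cell: because $p\in(D'')^\circ\cap f(C^\circ)$, the assumed relation $f\pitchfork_M (D'')^\circ$ is active at $x_0$.

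The main obstacle is that I cannot invoke transversality of the top cells, i.e. $f|_{{C'}^\circ}\pitchfork {D'}^\circ$, because that statement is vacuous until $f({C'}^\circ)\cap {D'}^\circ$ is already known to be nonempty---which is the very conclusion. Thus the only usable input is the infinitesimal transversality read off at the boundary point $x_0$. Writing $L$ for the linear part of the affine map $f|_{C'}$ (which restricts on $U:=\mathrm{dir}(\mathrm{aff}\,C)$ to the derivative of $f|_{C^\circ}$ at $x_0$, since $C\le C'$) and $Y:=\mathrm{dir}(\mathrm{aff}\,D'')$, transversality at $x_0$ is exactly the linear-algebraic statement $L(U)+Y=\R^n$. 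This one identity is what I would carry through the rest of the argument.

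Next I would translate the problem into tangent cones. Let $K:=T_{x_0}C'$ and $K':=T_{p}D'$ be the tangent cones; $K$ is a polyhedral cone with lineality space $U$ and $K'$ one with lineality space $Y$. The standard dictionary for relative interiors of tangent cones gives $x_0+tv\in {C'}^\circ$ for small $t>0$ precisely when $v\in\mathrm{relint}(K)$, and $p+tw\in {D'}^\circ$ precisely when $w\in\mathrm{relint}(K')$. Because $f$ is affine on $C'$, one has $f(x_0+tv)=p+tLv$, so the whole statement reduces to finding a single direction $v\in\mathrm{relint}(K)$ with $Lv\in\mathrm{relint}(K')$; the witness point is then $x_1=x_0+tv$ for $t>0$ small.

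The crux---and where $L(U)+Y=\R^n$ finally pays off---is producing that direction. Here I would use that $\mathrm{relint}(K)$ is invariant under adding vectors of its lineality $U$, and $\mathrm{relint}(K')$ under adding vectors of $Y$. Fix any $v_1\in\mathrm{relint}(K)$ and consider the affine family $\{L(v_1+u):u\in U\}=Lv_1+L(U)$. Projecting by $q:\R^n\to\R^n/Y$, transversality gives $q(L(U))=\R^n/Y$, so $q(Lv_1+L(U))$ is all of $\R^n/Y$ and therefore meets the nonempty relatively open set $q(\mathrm{relint}(K'))$. Unwinding, some $u\in U$ yields $L(v_1+u)\in\mathrm{relint}(K')+Y=\mathrm{relint}(K')$ while $v_1+u\in\mathrm{relint}(K)$, giving the required $v$. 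The only points needing careful verification are the tangent-cone/relative-interior dictionary and that $\mathrm{relint}(K')$ absorbs its lineality $Y$; the rest is linear algebra.
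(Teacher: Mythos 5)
Your proof is correct, and it takes a genuinely different route from the paper's. The paper promotes the two face relations \emph{sequentially}: first it shows $f(C^\circ)\cap D'^\circ\neq\emptyset$ by arguing that if the affine span $A$ of $f(C)$ missed $D'^\circ$, then $A\cap D'$ would lie in a proper face $E'$ of $D'$ and $f$ would fail to be transverse on $C$ to $E'$, and then using that $f:C\to A$ is a submersion, so the image of a small neighborhood of $x_0$ in $C^\circ$ is open in $A$ and must meet $D'^\circ$; it then promotes $C$ to $C'$ by pulling back a neighborhood of a point of $D'^\circ$ and using $C\subset\partial C'$. You instead linearize everything at a single point: passing to the minimal face $D''$ of $D'$ whose relative interior contains $p=f(x_0)$ converts the hypothesis into the one identity $L(U)+Y=\R^n$, and the tangent-cone dictionary reduces \emph{both} promotions simultaneously to finding $v\in\mathrm{relint}(T_{x_0}C')$ with $Lv\in\mathrm{relint}(T_pD')$, which your quotient-by-$Y$ argument produces; the witness is then $x_0+tv$ for small $t>0$. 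Your route buys an explicit witness in one uniform construction, and it pinpoints exactly where transversality enters --- at the minimal face $D''$, where $f(C^\circ)$ provably meets the \emph{interior} of the target cell. This is actually tidier than the paper's appeal to transversality at $E'$, since there $f(C^\circ)$ is only known to meet $E'$, not $(E')^\circ$, and your opening observation about the vacuousness of transversality to $D'^\circ$ is the right diagnosis of why some such device is forced. The facts you flag for verification are standard and true, and in fact you need less than you claim: you do not need the lineality spaces of $T_{x_0}C'$ and $T_pD'$ to \emph{equal} $U$ and $Y$, only to contain them, which is immediate since $\pm u$ points into $C'$ from $x_0\in C^\circ$ for every $u\in U$ (and similarly for $Y$ at $p\in(D'')^\circ$); with containment, $\mathrm{relint}(K)+U=\mathrm{relint}(K)$ and $\mathrm{relint}(K')+Y=\mathrm{relint}(K')$ still hold because translation by such vectors maps each cone bijectively to itself. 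The paper's argument is shorter and stays in point-set language; yours is more self-contained at the linear-algebra level and specializes cleanly in the degenerate cases (e.g.\ $C=C'$, $D''=D'$, or $C'$ a point, where transversality forces $D'$ full-dimensional).
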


% CHECK THAT EVERYTHING GOT SWAPPED CORRECTLY! 
\begin{proof}
	First we show that $f(C^\circ) \cap D'^\circ$ is nonempty. If $D=D'$ then we are done. Otherwise, consider $f(C)$, which is the image of the polyhedron $C$ under an affine map. If the affine span $A$ of $f(C)$ does not intersect the interior of $D'$, then $A \cap D'$ is contained in a proper face $E'$ of $D'$. In this case, $T(f(C))\oplus T(E') \neq \mathbb{R}^n$ , and $f$ is not transverse on $C$ to $E'$. Therefore, $A\cap D'^\circ$ is nonempty. 
	
	Within $A$ we have $\partial(A \cap D') \subseteq A \cap \partial(D')$. As a result letting $x \in C^\circ$ and $f(x) \in D' $, every open neighborhood of $f(x)$ in $A$ must have nontrivial intersection with the interior of $D'$. Take an open neighborhood $N$ of $x$ in $C^\circ$. We note that $f: C \to A$ is a submersion (locally a surjective linear map). Thus $f(N)$ is open in $A$, and $N \cap D'^\circ$ must be nonempty, so $C^\circ \cap f^{-1}(D'^\circ)$ is nonempty. 
	
	To see that $D'^\circ\cap f(C'^\circ)$ is nonempty, take $x \in C^\circ$ with $f(x)\in D'^\circ$. If $N$ is a neighborhood of $f(x)$ in $D'^\circ$ then $f^{-1}(N)$ must be an open neighborhood of $x$ in $M$, containing $x \in C$. As $C \subset \partial C'$, $f^{-1}(N)\cap C'$ is nonempty, so $f(C'^\circ)\cap D'$ is nonempty. 
\end{proof}

% PRODUCT OF SIGN SEQUENCES

Now we can define a key algebraic structure which will lead us to be able to deduce the structure of $\mathcal{C}(F)$ in general, in particular an algebraic structure which allows us to generate all sign sequences which are present from the sign sequences of the vertices. The following holds for all supertransversal networks (and does not rely on the layer maps being generic). 

\begin{lemma}\label{lem:faceproduct}
	Let $F$ be a supertransversal neural network. 
	
	If $C$ and $D$ are two cells of $\mathcal{C}(F)$, the product $S(C)\cdot S(D)$ given by: 
	
	$$(S(C)\cdot S(D))_{ij} =\begin{cases}
	S(C)_{ij} & \text{if }S(C)_{ij} \neq 0 \\ 
	S(D)_{ij} & \text{otherwise}
	\end{cases} $$ 
	
	is well-defined as a product between sign sequences. That is, there exists a cell $E$ in $\mathcal{C}(F)$ such that $S(C)\cdot S(D) = S(E)$ for all such cells $C$ and $D$. 
	
	Furthermore, $C \leq E$, that is, $C$ is a face of $E$ or equal to $E$. 
	
	Thus, sign sequences of a supertransversal network form a semigroup.	
\end{lemma}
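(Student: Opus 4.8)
The plan is to prove closure of the set of realized sign sequences under the stated product --- which, together with the evident associativity of the coordinatewise operation on strings in $\{-1,0,1\}^N$, yields the semigroup claim --- by induction on layers, using the forward construction of Definition \ref{def:canonicalcomplex}. The face relation $C \leq E$ and the nonemptiness of the intermediate cells will be supplied by Lemmas \ref{lem:facerelations} and \ref{lem:facerelations_transversality}; the latter is exactly where supertransversality enters and is the main obstacle to adapting the classical hyperplane-arrangement argument.

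First I would record the formal reduction: since $S(C)\cdot S(D)$ is defined coordinatewise and is associative as an operation on $\{-1,0,1\}^N$, the lemma reduces to showing that $S(\mathcal{C}(F)) \subseteq \{-1,0,1\}^N$ is closed under the product, i.e.\ that for each $C,D$ there is a cell $E$ with $S(E) = S(C)\cdot S(D)$. For the base case $\mathcal{C}(F_{(1)}) = R^{(1)}$ the node maps $F_{1j}$ are genuinely affine on all of $\mathbb{R}^{n_0}$, so this is the classical face semigroup of a hyperplane arrangement \cite{hyperplanes}: moving from a point of $C$ a small distance toward a point of $D$ lands in a cell with covector $S(C)\cdot S(D)$, which moreover has $C$ as a face.

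For the inductive step I would write $f = F_{(k-1)}$ and use that every cell of $\mathcal{C}(F_{(k)})$ is $C \cap f^{-1}(R)$, with $C \in \mathcal{C}(F_{(k-1)})$ the unique cell containing its interior and $R \in R^{(k)}$, so that its sign sequence splits into disjoint coordinate blocks: the early block (layers $\leq k-1$) equals $S(C)$ and the layer-$k$ block equals $S(R)$. Consequently, for $C_1 = C \cap f^{-1}(R)$ and $C_2 = C' \cap f^{-1}(R')$ the product $S(C_1)\cdot S(C_2)$ is, blockwise, $S(C)\cdot S(C')$ on the early coordinates and $S(R)\cdot S(R')$ on the layer-$k$ coordinates. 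The inductive hypothesis supplies $\tilde C \in \mathcal{C}(F_{(k-1)})$ with $S(\tilde C) = S(C)\cdot S(C')$ and $C \leq \tilde C$, and the hyperplane-arrangement face semigroup applied to $R^{(k)}$ supplies $\tilde R \in R^{(k)}$ with $S(\tilde R) = S(R)\cdot S(R')$ and $R \leq \tilde R$. The candidate cell is then $E = \tilde C \cap f^{-1}(\tilde R)$, which satisfies $C_1 \leq E$ by Lemma \ref{lem:facerelations}.

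The crux is to show that $E$ is a single cell of $\mathcal{C}(F_{(k)})$ whose sign sequence is exactly the product --- in particular that $E$ meets $\tilde C^\circ$ and lands in $\tilde R^\circ$, so that no extra zeros are introduced in either coordinate block. This is precisely the content of Lemma \ref{lem:facerelations_transversality}: since $f(C_1^\circ) \subseteq f(C^\circ) \cap R^\circ$ is nonempty, the supertransversality of $f$ on cells of $\mathcal{C}(F_{(k-1)})$ to the interiors of cells of $R^{(k)}$ upgrades the face relations $C \leq \tilde C$ and $R \leq \tilde R$ to $f(\tilde C^\circ) \cap \tilde R^\circ \neq \emptyset$. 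Any $q \in \tilde C^\circ \cap f^{-1}(\tilde R^\circ)$ then has early signs $S(\tilde C)$ and layer-$k$ signs $S(\tilde R)$, so the cell containing $q$ in its interior realizes $S(C_1)\cdot S(C_2)$, completing the induction. I expect the main difficulty to be the bookkeeping that "the unique cell containing the interior" is well defined and that the two coordinate blocks behave independently under the product; the genuinely nontrivial geometric input is the transversality lemma, which is what fails for arbitrary piecewise-affine arrangements and forces the supertransversality hypothesis --- in contrast to the straight-line argument of the base case, which breaks down for bent hyperplanes precisely because node maps are only affine cellwise.
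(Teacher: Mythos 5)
Your overall architecture matches the paper's: induction through the layers with the hyperplane-arrangement face semigroup as base case, Lemma \ref{lem:facerelations} supplying the face relation $C\leq E$, and Lemma \ref{lem:facerelations_transversality} supplying nonemptiness of the candidate cell. But there is a genuine gap at exactly the step you identify as the crux. You run the induction through the \emph{forward} construction, so the map you feed into Lemma \ref{lem:facerelations_transversality} is the composite $f = F_{(k-1)}$, and you need $f$ to be transverse on cells of $\mathcal{C}(F_{(k-1)})$ to the interiors of all cells of $R^{(k)}$. You call this ``the supertransversality of $f$,'' but that is not what Definition \ref{def:supertransversal} says: supertransversality constrains each \emph{single layer map} $F_i$, on cells of the hyperplane arrangement in its own domain, against the interiors of cells of the \emph{backwards} complex $\mathcal{C}(F^{(i)})$ in its codomain. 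It says nothing directly about transversality of the composite $F_{(k-1)}$ on cells of the pulled-back complex in input space. Nor does the gap close easily: a cell $C$ of $\mathcal{C}(F_{(k-1)})$ has $F_{(k-2)}(C)$ contained in some cell $R$ of the intermediate arrangement, but $T(F_{(k-2)}(C))$ can be a proper subspace of $T(R)$, so transversality of the later layer restricted to $R^\circ$ does not imply transversality of the composite restricted to $C^\circ$. Deriving your forward transversality statement from the definition would require its own inductive argument (effectively re-proving, in the forward direction, the content that the backwards complexes $\mathcal{C}(F^{(i)})$ already package), and note it is even stronger than the ``network transversality'' of the earlier literature, since you need transversality to interiors of \emph{all} cells of $R^{(k)}$, including the lower-dimensional strata where several layer-$k$ hyperplanes meet.

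The paper sidesteps this entirely by inducting through the \emph{backwards} construction: there a cell of $\mathcal{C}(F)$ is written $R_1 \cap F_1^{-1}(C')$ with $R_1 \in R^{(1)}$ and $C' \in \mathcal{C}(F^{(2)})$, the inductive hypothesis is applied to the supertransversal tail network $F^{(2)}$, and the map fed into Lemma \ref{lem:facerelations_transversality} is the single layer $F_1$ on cells of $R^{(1)}$ against cells of $\mathcal{C}(F^{(2)})$ --- which is verbatim one clause of Definition \ref{def:supertransversal}. Your blockwise bookkeeping (early coordinates versus new-layer coordinates, with the unique cell containing the interior giving a canonical representation) is sound and mirrors the paper's concatenation argument, so the fix is structural rather than conceptual: flip the induction so that the transversality hypothesis you invoke is literally the one the definition provides, rather than a forward-composite analogue that remains unproved.
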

\begin{proof}
	First, this is true for any single-layer network $F: \mathbb{R}^{n_m} \to \mathbb{R}$, since it is true for hyperplane arrangements; see \cite{hyperplanes}, Section 1.4 for a treatment. 
	
	Now, suppose these properties hold for any supertransversal $k$-layer neural network and inductively, using the backwards construction of $\mathcal{C}(F)$, let  $F$ be a $k+1$-layer supertransversal neural network.  Then $F^{(2)} =  G\circ F_{k+1}\circ ... \circ F_{2} $ is a $k$-layer supertransversal network and our inductive hypothesis holds for $\mathcal{C}(F^{(2)})$. We will denote the sign sequences of cells with respect to $\mathcal{C}(F^{(2)})$ by $S^{(2)}(C)$. 
	
	Let $C$ and $D$ be cells of $\mathcal{C}(F) = \mathcal{C}(F^{(2)}\circ F_1)$. Then $C=R_1 \cap F_1^{-1}(C')$ and $D = R_2 \cap F_1^{-1}(D')$, for cells $R_1, R_2$ in $R^{(1)}$ and cells $C', D' \in \mathcal{C}(F^{(2)})$, by the definition of $\mathcal{C}(F)$. Now, by inductive hypothesis $S^{(2)}(C')\cdot S^{(2)}(D') = S^{(2)}(E')$ for some cell $E'$ in $\mathcal{C}(F^{(2)})$, and $C'$ is a face of $E'$.
	
	 Denote the sign sequences with respect to $R^{(1)}$ as $S_1$. Since $R^{(1)}$ is a polyhedral complex induced by an affine hyperplane arrangement, $S_1(R_1)\cdot S_1(R_2) = S_1(R_3)$ for $R_3$ a region in $R^{(1)}$, and $R_1$ is a face of $R_3$ or equal to $R_3$.
	 
	 Let $E = R_3 \cap F_1^{-1}(E')$. We wish to show that $S(C)\cdot S(D) = S(E)$, and that $C$ is a face of $E$ or equal to it. Now, $S(C)$ is obtained by $S_1(R_1)$ concatenated with $S^{(2)}(C')$, and likewise for the other cells. Since $S_1(R_1)\cdot S_1(R_2)=S_1(R_3) $ by the corresponding hyperplane arrangement and $S^{(2)}(C')\cdot S^{(2)}(D')=S(E')$ by inductive hypothesis, by concatenation this gives $S(C)\cdot S(D)=S(E)$.
	 
	 To see that $E$ is nonempty we must note that since $F$ is supertransversal, $F_1$ is transverse on $R_1$ to $C'$, and apply Lemma \ref{lem:facerelations_transversality} to obtain that $R_3^\circ \cap F_1^{-1}(E')^\circ$ is nonempty.
	
	Lastly, we recall from Lemma \ref{lem:facerelations} that $C' \leq E'$ and $R_1 \leq R_3$ implies that $(F_1^{-1}(C') \cap R_1 ) \leq (F_1^{-1}(E') \cap R_3)$, that is, $C \leq E$. 
\end{proof}

The following properties of the product defined above are analogous to the same properties in hyperplane arrangements. 

\begin{lemma}\label{lem:faceproperties}
	For all supertransversal networks, the following relations hold for all $C$ and $D$ in $\mathcal{C}(F)$, where the relation $\leq$ denotes ``is a face of":
	\begin{enumerate}	
		\item $C \leq D$ if and only if $S(C)\cdot S(D) = S(D)$ 
		\item $S(C) \cdot S(D) = S(D) \cdot S(C)$ if and only if there is a cell $E$ with $D\leq E$ and $C\leq E$. 	
		\item $S(C) \cdot S(D) = S(C)$ if and only if all bent hyperplanes which contain $D$ also contain $C$. 
	\end{enumerate}
\end{lemma}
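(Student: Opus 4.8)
The plan is to prove the three equivalences in order, establishing the only genuinely geometric ingredient first and then reducing everything else to it together with the semigroup structure of Lemma \ref{lem:faceproduct} and the injectivity of $s$ from Theorem \ref{thm:injective}. Throughout I write $z(C) = \{(i,j) : S(C)_{ij} = 0\}$ for the set of coordinates on which the sign sequence of $C$ vanishes; by Lemma \ref{lem:countzeros} these index exactly the bent hyperplanes containing $C$, so statements about zero-patterns translate directly into statements about incidence with bent hyperplanes.

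For (1), the forward direction $C \leq D \Rightarrow S(C)\cdot S(D) = S(D)$ is where the topology of $\mathcal{C}(F)$ actually enters. Unwinding the product, $S(C)\cdot S(D) = S(D)$ is equivalent to the \emph{conformality} condition $S(C)_{ij} = S(D)_{ij}$ at every coordinate with $S(C)_{ij}\neq 0$. To verify this from $C\leq D$, I would pick $x \in C^\circ$ and $y \in D^\circ$; since $C \subseteq \overline{D}$ and cells are convex polyhedra, the half-open segment $(x,y]$ lies in the relative interior $D^\circ$, so its points near $x$ realize the sign $S(D)_{ij}$. For any coordinate with $S(C)_{ij} \neq 0$ the node map $F_{ij}$ is nonzero at $x$, hence has constant sign $S(C)_{ij}$ on a neighborhood of $x$ by continuity; comparing against these nearby interior points of $D$ forces $S(D)_{ij} = S(C)_{ij}$. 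The converse is then formal: if $S(C)\cdot S(D) = S(D)$, Lemma \ref{lem:faceproduct} produces a cell $E$ with $S(E) = S(C)\cdot S(D) = S(D)$ and $C \leq E$, and injectivity of $s$ gives $E = D$, so $C \leq D$.

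With (1) in hand, (2) is a short argument in each direction. If $S(C)\cdot S(D) = S(D)\cdot S(C)$, I would apply Lemma \ref{lem:faceproduct} to both products to obtain cells $E, E'$ with $C \leq E$, $D \leq E'$, and $S(E) = S(E')$; injectivity forces $E = E'$, yielding a single cell with $C \leq E$ and $D \leq E$. Conversely, given such an $E$, part (1) gives $S(C)\cdot S(E) = S(E)$ and $S(D)\cdot S(E) = S(E)$, so at any coordinate where both $S(C)$ and $S(D)$ are nonzero we get $S(E)_{ij} = S(C)_{ij}$ and $S(E)_{ij} = S(D)_{ij}$, hence $S(C)_{ij} = S(D)_{ij}$. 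Since the two orders of the product can differ only at coordinates where both entries are nonzero and opposite, this agreement is exactly the condition guaranteeing $S(C)\cdot S(D) = S(D)\cdot S(C)$.

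Finally, (3) is coordinate bookkeeping against the definition of the product. Comparing $S(C)\cdot S(D)$ with $S(C)$ entry by entry, the two agree automatically wherever $S(C)_{ij}\neq 0$, so the equation $S(C)\cdot S(D) = S(C)$ reduces entirely to the pattern of vanishing coordinates, pinning down a containment of zero-sets. Translating through Lemma \ref{lem:countzeros}, which identifies the vanishing coordinates of a cell with the bent hyperplanes containing it, gives the stated geometric characterization: $S(C)\cdot S(D)=S(C)$ holds exactly when every bent hyperplane containing $D$ also contains $C$. I expect the main obstacle to be the forward direction of (1): it is the only step that reasons about the actual geometry of the cells (continuity of node maps and convexity of polyhedra) rather than formally manipulating sign sequences, and both (2) and (3) are downstream consequences once it is secured.
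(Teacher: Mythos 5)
Your parts (1) and (2) are correct and essentially the paper's own route: the backward direction of (1) is exactly the paper's argument (Lemma \ref{lem:faceproduct} supplies a cell $E$ with $S(E)=S(C)\cdot S(D)=S(D)$ and $C\leq E$, and injectivity from Theorem \ref{thm:injective} forces $E=D$), and your (2) is precisely the expansion of what the paper compresses into ``immediate from the previous statement and Lemma \ref{lem:faceproduct}.'' In fact your forward direction of (1) is slightly \emph{more} complete than the paper's: the paper argues by contrapositive that failure of $S(C)\cdot S(D)=S(D)$ produces an index with $s_{ij}(D)=-s_{ij}(C)\neq 0$, silently omitting the other possible failure mode $s_{ij}(C)\neq 0$, $s_{ij}(D)=0$; your accessibility-lemma argument (the segment $(x,y]$ lies in $D^\circ$ for $x\in C\subseteq D$, $y\in D^\circ$) shows directly that $S(D)_{ij}=S(C)_{ij}$ whenever $S(C)_{ij}\neq 0$, disposing of both cases at once.

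Part (3), however, has a genuine direction error. Doing the entrywise bookkeeping honestly: $(S(C)\cdot S(D))_{ij}=S(C)_{ij}$ wherever $S(C)_{ij}\neq 0$, so $S(C)\cdot S(D)=S(C)$ holds if and only if $S(D)_{ij}=0$ at every coordinate where $S(C)_{ij}=0$ --- that is, every bent hyperplane containing $C$ also contains $D$ (in your notation, $z(C)\subseteq z(D)$). This is what the paper's own proof of (3) derives. Your final sentence asserts the reverse inclusion, ``every bent hyperplane containing $D$ also contains $C$,'' which reproduces the lemma as printed --- but the printed statement is inconsistent with the paper's proof, and is refutable: take $C$ any vertex and $D$ any top-dimensional cell, so $z(D)=\emptyset\subseteq z(C)$ holds vacuously, yet $S(C)\cdot S(D)$ has no zero entries while $S(C)$ has $n_0$ of them, so $S(C)\cdot S(D)\neq S(C)$. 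So your reduction is described correctly but your conclusion is oriented the wrong way; the fix is to swap $C$ and $D$ in the hyperplane condition. One further small point: the dictionary between vanishing coordinates and bent hyperplanes containing a cell is just the definition of $s$ ($s_{ij}(C)=0$ iff $C\subseteq F_{ij}^{-1}(0)$), not Lemma \ref{lem:countzeros}; the latter counts zeros against codimension and requires genericity, whereas statement (3) is asserted for all supertransversal networks, so routing through Lemma \ref{lem:countzeros} imports a hypothesis you do not have and do not need.
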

\begin{proof}
	$\;$
	
	\begin{enumerate}
		\item We have already shown if $S(C) \cdot S(D) = S(D)$ then $ C\leq D$. If $S(C)\cdot S(D) \neq S(D)$ then there is some index where $s_{ij}(C)=\pm 1$ and $s_{ij}(D)=-s_{ij}(C)$. If so, then $C$ and $D$ are sent to opposite sides of some hyperplane in some layer; this cannot occur if $C \leq D$. 
		
		\item Immediate from the previous statement and Lemma \ref{lem:faceproduct}. 
		
		\item $S(C) \cdot S(D) = S(C)$ if and only if for all node maps for which $F_{ij}(C)=0$, we also have $F_{ij}(D)=0$. But this is true if and only if all bent hyperplanes which contain $C$ also contain $D$. 
	\end{enumerate}
\end{proof}

% CUBICAL COMPLEX 

We now assemble these ideas to present a duality between the canonical polyhedral complex of a generic, supertransversal neural network and a pure cubical complex, providing a surprising amount of new structure to the combinatorics of the canonical polyhedral complex. 

\begin{theorem}\label{thm:cubicalcomplex}
	For each generic, supertransversal neural network $F: \mathbb{R}^{n_0}\to \mathbb{R}$ with at least $n_0$ hidden units in the first layer, the image of the map $S: \CF \to \{-1,0,1\}^n$ uniquely defines a pure $n_0$-dimensional subcomplex of the hypercube $[-1,1]^N$ endowed with the product CW structure. We will call this subcomplex $\mathcal{S}(F)$. In the image, the vertices in $\mathcal{S}(F)$ correspond to $n_0$-cells in $\mathcal{C}(F)$, and in general the $k$-cells of $\mathcal{S}(F)$ correspond to codimension-$k$ cells of $\mathcal{C}(F)$. 
\end{theorem}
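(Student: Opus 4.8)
The plan is to establish the statement in three parts: the dimension correspondence, the fact that $\mathcal{S}(F):=S(\CF)$ is closed under passing to cube faces (hence a genuine subcomplex), and its purity. Throughout I would identify a sign sequence $v\in\{-1,0,1\}^N$ with the face $\prod_k I_k$ of $[-1,1]^N$ in the product CW structure, where $I_k=\{v_k\}$ if $v_k\neq 0$ and $I_k=[-1,1]$ if $v_k=0$; the dimension of this cube face is exactly the number of zeros of $v$. Lemma \ref{lem:countzeros} then says that a $k$-cell of $\CF$ is sent to an $(n_0-k)$-dimensional cube face, which is the claimed dimension correspondence (so vertices of $\mathcal{S}(F)$ arise from the top $n_0$-cells, and $n_0$-dimensional cube faces from vertices of $\CF$). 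Theorem \ref{thm:injective} makes $S$ a bijection onto its image, so $\mathcal{S}(F)$ is well-defined as a set of cube faces; this settles the ``uniquely defines'' clause.

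For the subcomplex property I would induct on the number of layers using the backwards construction of $\CF$ (Definition \ref{def:canonicalcomplex}), mirroring the proof of Lemma \ref{lem:faceproduct}. The base case is a single affine hyperplane arrangement, where the image of the sign-vector map is a subcomplex of the cube by the classical theory of arrangements \cite{hyperplanes}. For the inductive step, write $F=F^{(2)}\circ F_1$, so that every cell of $\CF$ has the form $E=R\cap F_1^{-1}(C_0)$ with $R\in R^{(1)}$ and $C_0\in\mathcal{C}(F^{(2)})$, and $S(E)$ is the concatenation $S_1(R)\,\|\,S^{(2)}(C_0)$. Any cube face of $S(E)$ is obtained by fixing some of its zeros, so it suffices to realize each facet (a single fixed zero) and then iterate over chains of facet inclusions to reach every face. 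If the fixed zero lies in the first-layer block, the corresponding coface $\tilde R\geq R$ exists in $R^{(1)}$ by the base case, and supertransversality lets me apply Lemma \ref{lem:facerelations_transversality} (with $R\leq\tilde R$ in $R^{(1)}$ and $C_0\leq C_0$) to conclude $\tilde R\cap F_1^{-1}(C_0)$ is nonempty. If it lies in the later-layer block, the coface $\tilde C\geq C_0$ exists in $\mathcal{S}(F^{(2)})$ by the inductive hypothesis, and Lemma \ref{lem:facerelations_transversality} (with $R\leq R$ and $C_0\leq\tilde C$) gives that $R\cap F_1^{-1}(\tilde C)$ is nonempty. In each case the nonempty intersection is, by the backwards construction, a cell of $\CF$ whose sign sequence is the desired facet; hence $\mathcal{S}(F)$ is closed under faces.

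For purity, Lemma \ref{lem:faceproperties}(1) gives the order-reversing duality that $C\leq D$ in $\CF$ iff $S(D)\leq S(C)$ in the cube, so purity --- every cube face is a face of an $n_0$-dimensional cube face --- is equivalent to the statement that every cell of $\CF$ contains a vertex of $\CF$ in its closure. I would prove this with lineality spaces: with at least $n_0$ units in the first layer together with genericity, the normals of the first-layer hyperplanes span $\mathbb{R}^{n_0}$, so every cell $R$ of $R^{(1)}$ has trivial lineality. Since any cell $Q$ of $\CF$ sits inside such an $R$ and is cut out by further affine constraints, $\mathrm{lineality}(Q)\subseteq\mathrm{lineality}(R)=\{0\}$; a lineality-free polyhedron is pointed and so has a vertex, which is a $0$-cell of $\CF$ because the complex is closed under faces.

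The main obstacle is the inductive step for the subcomplex property: I must ensure that fixing a sign in one block of the concatenated sign sequence produces a \emph{nonempty} cell rather than an empty preimage intersection. This is precisely where supertransversality is essential, as it is exactly the hypothesis that makes Lemma \ref{lem:facerelations_transversality} applicable at each layer and thereby propagates the existence of cofaces through $F_1^{-1}$. A secondary point is that purity genuinely needs the first-layer width hypothesis: if $n_1<n_0$ the first-layer normals cannot span, every cell inherits a common lineality direction and thus has no vertex, and $\mathcal{S}(F)$ fails to be pure.
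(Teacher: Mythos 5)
Your proposal is correct, but it reaches the theorem by a genuinely different route than the paper. The paper proves purity by exhibiting one vertex of $\CF$ (genericity of the first layer: $n_0$ hyperplanes in general position meet in a point) and then citing connectedness (\cite{glmsecond}, Corollary 5.29) to conclude that every polyhedron of $\CF$ has a vertex; face-closure is then proved only for the $n_0$-cubes, via a local argument at each vertex $v$: for every node map with $F_{ij}(v)=0$ there must exist incident edges realizing both signs $\pm 1$, since otherwise $F_{ij}$ would be forced to vanish on an entire cell of $\mathcal{C}(F_{(i-1)})$ containing $v$, contradicting transversality on cells; after that, the semigroup multiplication of Lemma \ref{lem:faceproduct} generates all remaining faces of each $n_0$-cube. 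You instead prove face-closure for all cells directly, by the same backwards-construction induction that the paper uses to prove Lemma \ref{lem:faceproduct}, relaxing one zero of the concatenated sign sequence at a time and using Lemma \ref{lem:facerelations_transversality} to propagate the existence of cofaces through $F_1^{-1}$; and you prove purity by a pointedness argument (generic first-layer normals span $\mathbb{R}^{n_0}$, so no cell of $\CF$ contains a line, so every nonempty cell has a vertex, which is a $0$-cell because the complex is closed under faces). Your purity argument is more elementary and self-contained than the appeal to \cite{glmsecond}, and it isolates cleanly why the hypothesis of $n_0$ hidden units in the first layer is needed (your remark that $n_1<n_0$ forces a common lineality direction, hence no vertices, is correct). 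What the paper's route buys instead is the sharper local statement that every vertex of $\CF$ has incident edges realizing both signs for each of its $n_0$ zeros, which is reused implicitly in Lemma \ref{lem:coboundary} and in the vertex-based algorithm.

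Two points should be made explicit for your induction to be airtight. First, when you write $E=R\cap F_1^{-1}(C_0)$ and assert $S(E)=S_1(R)\,\|\,S^{(2)}(C_0)$, this holds only for the minimal such pair: you must choose $R$ and $C_0$ so that $E^\circ\subseteq R^\circ$ and $F_1(E^\circ)\subseteq C_0^\circ$, as the paper arranges in the proof of Lemma \ref{lem:countzeros}; the same minimal choice is exactly what furnishes the hypothesis $F_1(R^\circ)\cap C_0\neq\emptyset$ of Lemma \ref{lem:facerelations_transversality}, and it also guarantees that the nonempty intersection you produce really has the relaxed sign sequence. Second, single-zero relaxation in the arrangement steps (your base case and the first-layer block of the inductive step) is not a property of arbitrary arrangements: for three concurrent lines in $\mathbb{R}^2$, the sign vector $(+,0,0)$ is not realized even though $(0,0,0)$ is, so the image of the sign map need not be a subcomplex of the cube without genericity. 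You must therefore invoke genericity of each $R^{(i)}$ (Definition \ref{def:generic}) at those steps; with genericity, the flat cut out by the retained zeros is cut properly by the relaxed hyperplane, and moving within that flat produces cofaces on both sides while preserving the strict signs.
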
 
\begin{proof}
	Recall that cubical faces of $[-1,1]^N$ (with its product CW structure) can be identified by sequences of $\{-1,0,1\}^N$.
		
	First if $F$ has at least $n_0$ hidden units in the first layer and it is generic, then $\mathcal{C}(F)$ contains vertices as some of its cells, since the intersection of $n_0$ hyperplanes in general position in $\mathbb{R}^{n_0}$ is a point. Since $\mathcal{C}(F)$ is a connected polyhedral complex, if any of its polyhedra have vertices, then all of them do (see \cite{glmsecond}, Corollary 5.29). %better reference?  
	
	For any $C \in \mathcal{C}(F)$ there is a vertex $v\leq C$. There are $n_0$ coordinates where $S(v)=0$ by Lemma \ref{lem:countzeros}.  Furthermore $S(v)\cdot S(C)= S(C)$ by Lemma \ref{lem:faceproduct}. Thus $S(C)$ is equal to $S(v)$ except those places where $S(v)=0$. But this is equivalent to the condition that the $n_0$-cell $S(v) \in\mathcal{S}(F)$ has $S(C)$ on its boundary. So, every in the image of $S(F)$ is contained in an $n_0$-cube which is also in the image of $S(F)$. (There are no $n_0+1$-cubes in the image of $S(F)$ by Lemma \ref{lem:countzeros}.) Thus the image of $S(F)$ is "pure $n_0$-dimensional" in the sense that every cube in $S(F)$ is a face of an $n_0$-cube in $S(F)$. 

	Next we show that for a given $n_0$-cube in the image of $S(F)$, all its faces are in the image of $S(F)$. Our strategy is to show that there exists an edge corresponding to each possible sign sequence incident to the corresponding vertex. Then we may apply the sign sequence multiplication in Lemma \ref{lem:faceproduct} to obtain all remaining faces. This is equivalent to establishing that a vertex $v$ of $\mathcal{C}(F)$ has $2n_0$ neighboring edges, each of which have a $1$ or $-1$ replacing a single $0$ from $S(v)$. Of course, any vertex must be incident to at least $n_0$ edges since it belongs to a polyhedral complex with domain $\mathbb{R}^{n_0}$, so we show that if there exists an edge incident to $v$ with $S_{ij}(E)=1$ while $S_{ij}(v)=0$, then there also exists an edge with $S_{ij}(E)=-1$ (and, by symmetry, vice versa).
	
	Suppose that this is not the case for some $v$. Then without loss of generality there exists an earliest $(i,j)$ node map satisfying that $F_{ij}(v)=0$ but for all edges $E$ neighboring $v$ in $\mathcal{C}(F)$, $F_{ij}(E)\geq 0$, since for each edge $E$, $S(E)$ differs from $S(v)$ only in one location. Since the edge set of $v$ is nonempty, this implies that $F_{ij}$ cannot be affine on any affine subspace of $\mathbb{R}^{n_0}$ containing $v$ unless $F_{ij}=0$ on that subspace. %  comment on an inductive approach? 
	
	 As $v$ cannot be a vertex of $\mathcal{C}(F_{(i-1)})$, since it is contained in the intersection of fewer than $n_0$ bent hyperplanes before $F_{ij}$, it is contained in the interior of a larger cell $C$ in $\mathcal{C}(F_{(i-1)})$. As a result, $F_{ij}$ is affine on the interior of $C$. But by the previous paragraph, this means that $F_{ij}(C)=0$, and thus $F_{(i)}$ is not transverse on $C$ to a cell contained in $R^{(i)}$, and cannot be transverse on $C$ to any polyhedral subdivision (including $\mathcal{C}(F^{(i)})$). This implies that there is a layer of $F_{(i-1)}$ which fails to be transverse on cells, which is a contradiction.

	 So, if $v$ is a vertex of $\mathcal{C}(F)$, then for each node map $F_{ij}$ such that $F_{ij}(v)=0$, $v$ has an incident edge with $F_{ij}(E)=1$ and an incident edge with $F_{ij}(E)=-1$ by the same argument. We note by supertransversality that $S(E)$ must have $n_0-1$ entries which are zero. Also, since $v$ is incident to $E$, by Lemma \ref{lem:faceproperties}, $S(E)$ must have the same entries as $S(V)$ except possibly where $S(v)=0$. This means $S(E)=S(v)$ except for at the $(i,j)$ coordinate, as required. Since this occurs at all node maps for which $F_{ij}(v)=0$, we are done.  \end{proof}

Once the existing cells in $\mathcal{S}(F)$ have been located, we only need to establish an explicit duality. The majority of the work has already been done. 

\begin{lemma}\label{lem:coboundary}
	 The face poset of $\mathcal{C}(F)$ is the opposite poset of the face poset of $\mathcal{S}(F)$, and the (mod-two) cellular boundary map of $\mathcal{S}(F)$ is dual to the (mod-two) cellular boundary map of $\CF$.
\end{lemma}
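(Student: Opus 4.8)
The plan is to prove the two assertions in sequence. First I would establish that $S$ is an order-reversing bijection between the face poset of $\CF$ and that of $\SF$; the duality of the mod-two boundary maps will then follow formally. That $S$ is a bijection from the cells of $\CF$ onto the cells of $\SF$ is almost immediate: injectivity is Theorem \ref{thm:injective}, and surjectivity onto $\SF$ holds by definition, since $\SF$ is the image of $S$ and Theorem \ref{thm:cubicalcomplex} guarantees that every cubical face of an $n_0$-cube in the image lies again in the image. Lemma \ref{lem:countzeros} supplies the dimension bookkeeping: a $k$-cell $C$ of $\CF$ has $n_0-k$ zeros in $S(C)$, hence is sent to an $(n_0-k)$-dimensional cube, giving the dimension reversal $k \mapsto n_0-k$.

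The heart of the argument is the compatibility of the two face relations with the sign-sequence product. In the product CW structure on $[-1,1]^N$, a cube $\sigma$ is a face of a cube $\tau$ precisely when $\sigma$ agrees with $\tau$ in every coordinate where $\tau$ is fixed (nonzero); unwinding the definition of the product in Lemma \ref{lem:faceproduct}, this says exactly that $\sigma \leq \tau$ in the cube if and only if $\tau \cdot \sigma = \sigma$. Comparing this with Lemma \ref{lem:faceproperties}(1), which asserts that $C \leq D$ in $\CF$ iff $S(C)\cdot S(D) = S(D)$, I would set $\tau = S(C)$ and $\sigma = S(D)$ to conclude that $C \leq D$ in $\CF$ if and only if $S(D) \leq S(C)$ in $\SF$. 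This is precisely the claim that the face poset of $\CF$ is the opposite poset of that of $\SF$.

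For the boundary-map duality, I would restrict the anti-isomorphism to covering relations. Combining the dimension reversal with the order reversal, a covering $C \lessdot D$ in $\CF$ (so $\dim D = \dim C + 1$) corresponds to $S(D) \leq S(C)$ with $\dim S(C) = \dim S(D)+1$, hence to a covering $S(D) \lessdot S(C)$ in $\SF$, and conversely. Because $\CF$ and $\SF$ are regular complexes (polyhedral and cubical, respectively), their mod-two cellular boundary operators carry no orientation data: the incidence coefficient of a facet in a cell is $1$ when the facet relation holds and $0$ otherwise. Thus, under the identification of cells furnished by $S$, the $(C,D)$ entry of the boundary matrix of $\CF$ equals the $(S(D),S(C))$ entry of the boundary matrix of $\SF$; this is exactly the statement that one boundary operator is the transpose, i.e.\ the mod-two dual, of the other.

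The step I expect to be the main obstacle is the verification that the combinatorial face relation of the product CW structure on $[-1,1]^N$ is captured verbatim by the product of Lemma \ref{lem:faceproduct}; once this identification is secured, the remainder is bookkeeping. A secondary point requiring care is that $\CF$ may contain unbounded cells, so it is not a finite regular CW complex in the strictest sense; I would handle this by treating the mod-two boundary purely as the facet-incidence operator on the face poset, which is all the duality statement concerns, deferring the compactification needed to read it as a genuine chain complex to the surrounding discussion.
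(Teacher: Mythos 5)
Your proposal is correct and takes essentially the same route as the paper's proof: both rest on identifying cubes of $[-1,1]^N$ with their centers in $\{-1,0,1\}^N$, using Lemma \ref{lem:countzeros} for the dimension reversal and Lemma \ref{lem:faceproperties} to match the face relations, from which the mod-two boundary/coboundary duality is immediate. Your version is somewhat more explicit than the paper's --- proving the full order anti-isomorphism before specializing to covering relations, and flagging the unbounded-cell caveat that the paper defers to its main-text discussion --- but these are elaborations of the same argument, not a different one.
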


\begin{proof}
	 In $[-1,1]^N$, the cells consist of cubes which are uniquely defined by their center, at points given by sequences in $\{-1,0,1\}^N$. The dimension of each cube is given by the number of $0$ entries in this sequence. The cellular boundary of this cube consists of cells one dimension lower, with a $1$ or $-1$ replacing a $0$ in the sign sequence, providing the (mod two) boundary in $\mathcal{S}(C)$. In $\mathcal{C}(F)$, if $s(C)$ is related to $s(D)$ by replacing one zero entry of $C$ with a $1$ or $-1$, by Lemmas \ref{lem:faceproperties} and \ref{lem:countzeros}  that this is equivalent to $C \leq D$ and $dim(C)+1=dim(D)$, which is equivalent to $C$ being in the (mod two) cellular boundary of $D$. 
\end{proof}

Lastly, we prove that the process of computing $\mathcal{C}(F)$ can be done iteratively through layers, beginning with the first layer:

% FIRST LAYER VERTICES

\begin{lemma} \label{lem:firstlayervertices}
	Let $F$ be a supertransversal, generic neural network.
	
	The $0$-cells of $\mathcal{C}(F_{(1)})$ are given by the solutions to $$\{W_{\alpha} x = b_\alpha: \alpha \subset [n_1] \;\& \; |\alpha|=n_0  \}$$ where $W$ is the weight matrix of the network and $\alpha$ denotes a subset of the $n_1$ vertices. 
	
	A vertex $v$ obtained by solving $W_\alpha x = b_\alpha$ satisfies $s_{i}(v)=0$ iff $i \in \alpha$.  
\end{lemma}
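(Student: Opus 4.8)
The plan is to reduce this entirely to the base case of the forward construction in Definition~\ref{def:canonicalcomplex}, where $\mathcal{C}(F_{(1)})$ is by definition the complex $R^{(1)}$ of the affine hyperplane arrangement in $\mathbb{R}^{n_0}$ cut out by the first-layer node maps. The crucial simplification is that no $\relu$ has yet been applied, so each node map $F_{1i} = \pi_i \circ A_1$ is genuinely affine and its bent hyperplane $F_{1i}^{-1}(0)$ is an honest hyperplane $H_{1i} = \{x : W_i x = b_i\}$, with $W_i$ the $i$th row of $W$. Under this identification the sign sequence of Definition~\ref{def:signsequence}, restricted to the first layer, is exactly the standard sign-vector (covector) labeling of the arrangement $R^{(1)}$, so the statement becomes a routine fact about vertices of a generic affine hyperplane arrangement.

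First I would prove that each admissible solution is a vertex with the claimed sign sequence. Fix $\alpha \subset [n_1]$ with $|\alpha| = n_0$. Genericity (Definition~\ref{def:generic}) forces the $n_0$ hyperplanes $\{H_{1i}\}_{i \in \alpha}$ to meet in an affine space of dimension $n_0 - n_0 = 0$; equivalently the square matrix $W_\alpha$ is nonsingular, so $W_\alpha x = b_\alpha$ has a unique solution $v$. To identify $v$ as a $0$-cell I would determine its incidence set $\{i : v \in H_{1i}\} = \{i : F_{1i}(v) = 0\}$. This set contains $\alpha$ by construction, and genericity forbids any strictly larger incidence, since the intersection of any $n_0 + 1$ of the hyperplanes would have dimension $n_0 - (n_0+1) = -1$, i.e.\ is empty. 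Hence $v$ lies on exactly the hyperplanes indexed by $\alpha$, the sign sequence $s(v)$ has zeros precisely in the positions of $\alpha$, and the set of points carrying this sign vector is the single point $v$, which is therefore a $0$-cell of $R^{(1)}$ with $s_i(v) = 0$ iff $i \in \alpha$.

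The converse direction, that every $0$-cell arises in this way, I would read off from Lemma~\ref{lem:countzeros}: a $0$-cell $C$ of $\mathcal{C}(F_{(1)})$ has exactly $n_0 - 0 = n_0$ zero entries in $s(C)$, and by the definition of the sign sequence these zeros index precisely the hyperplanes containing $C$. Taking $\alpha$ to be that size-$n_0$ index set, $C$ is contained in $\bigcap_{i \in \alpha} H_{1i}$, which by the genericity dimension count is the unique solution of $W_\alpha x = b_\alpha$; so $C$ equals that solution. The same incidence bound shows the assignment $\alpha \mapsto v_\alpha$ is injective, as two distinct size-$n_0$ index sets sharing a solution point would place $v$ on more than $n_0$ hyperplanes.

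I expect the only genuine care to be in the incidence bookkeeping: establishing that the zero set of $s(v)$ is \emph{exactly} $\alpha$, with neither extra nor missing incidences, so that the codimension matches and the correspondence is a bijection. This rests solely on the single-layer genericity hypothesis together with Lemma~\ref{lem:countzeros}; supertransversality plays no role here, since the first layer is already a bona fide hyperplane arrangement and everything else is the standard dictionary between such an arrangement and its face poset.
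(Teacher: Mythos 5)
Your proposal is correct and takes exactly the paper's route: the paper's entire proof is the one-line observation that ``these are the vertices of a generic, affine hyperplane arrangement,'' and your argument is simply that standard fact written out in full, using genericity (Definition~\ref{def:generic}) to get a unique solution for each size-$n_0$ index set $\alpha$ and to cap the incidence at exactly $n_0$ hyperplanes, plus Lemma~\ref{lem:countzeros} (or the elementary dimension count, which suffices here) for the converse. Your closing remark that supertransversality is not actually needed for this single-layer statement is also accurate.
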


\begin{proof}
	These are the vertices of a generic, affine hyperplane arrangement. 
\end{proof}
 
 In order to compute the vertices of $\mathcal{C}(F)$ corresponding to bent hyperplanes from further layers, we loop through regions $C$ of $\mathcal{C}(F_{(k-1)})$ and solve systems of linear equations arising from $n_0$ bent hyperplanes on that region, at least one of which corresponds to a new bent hyperplane $F_{kj}$. The following lemma guarantees that if we select these combinations of $F_{ij}$ corresponding to earlier layers from only those which intersect to form cells on the boundary of $\mathcal{C}$, we are guaranteed to obtain all new vertices in $\mathcal{C}(F_{(k)})$. Furthermore, once such an intersection $x$ is found with new bent hyperplanes we may determine whether the intersection belongs to the polyhedral complex by evaluating $F_{ij}(x)$ at only the bent hyperplanes we did not intersect. Thus, we do not have to determine whether $\sgn(F_{ij}(x))=0$, removing a source of floating point error. 

% LATER LAYER VERTICES

\begin{lemma}\label{lem:laterlayervertices}
	Let $F$ be a generic, supertransversal neural network with at least $n_0$ hidden units in its first layer. 

	If $C$ is a cell of $\mathcal{C}(F_{k-1} \circ ... \circ F_1)$, then $F_{ij}(C)$ is affine for all $i \leq k$. Call the corresponding affine map $A_{ij}: \mathbb{R}^{n_0}\to \mathbb{R}$. Then, 
	
	\begin{enumerate}
	\item	All $0$-cells of $\mathcal{C}(F_k \circ ... \circ F_1)$  which are contained in the closure of $C$ and which are not already in $\mathcal{C}(F_{k-1}\circ ... \circ F_1)$ are the solution to a system of $n_0$ affine equations, of which $1 \leq \ell \leq n_0$ are of the form:
		
		$$ A_{km}(x)=0  $$
		
		and $0 \leq n_0 - \ell \leq n_0 - 1$ equations are of the form: 
		
		$$A_{ij}(x) = 0 ; i < k$$ 
		
		Here, the $A_{ij}$ of the $n_0-\ell $ equations from earlier layers are selected such that there exists a vertex of $C$ in the intersection of the corresponding bent hyperplanes. In other words, the remaining $n_0-\ell$ equations describe the affine span of a face of $C$.
		
	\item 	A solution to the system of equations described in (1) corresponds to a $0$-cell of $\mathcal{C}(F_k\circ ... \circ F_1)$ contained in the closure of $C$ if and only if, for all \textbf{remaining} $(i,j)$ pairs with $i \leq k-1$, we have that $s_{ij}(v)=s_{ij}(C)$.
	\end{enumerate}
\end{lemma}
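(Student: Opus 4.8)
The plan is to analyze a single new $0$-cell $v$ lying in $\overline{C}$ and read off exactly which bent hyperplanes pass through it, combining the codimension count of Lemma \ref{lem:countzeros} applied to both $\mathcal{C}(F_{(k-1)})$ and $\mathcal{C}(F_{(k)})$ with the injectivity of sign sequences. First I would dispose of the affineness claim: since $C$ is a cell of $\mathcal{C}(F_{(k-1)})$, the map $F_{(k-1)}$ is affine on $C$, so every node map $F_{ij}$ with $i \le k-1$ is affine on $C$, and $F_{kj} = \pi_j \circ A_k \circ F_{(k-1)}$ is affine on $C$ as a composition of an affine map with an affine-on-$C$ map. I denote the resulting affine functions $A_{ij}: \mathbb{R}^{n_0} \to \mathbb{R}$.

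For part (1), let $v$ be a $0$-cell of $\mathcal{C}(F_{(k)})$ with $v \in \overline{C}$ that is not a vertex of $\mathcal{C}(F_{(k-1)})$. By Lemma \ref{lem:countzeros} applied to $\mathcal{C}(F_{(k)})$, the point $v$ lies on exactly $n_0$ bent hyperplanes; since it is not a vertex of $\mathcal{C}(F_{(k-1)})$ it lies on fewer than $n_0$ bent hyperplanes from layers $\le k-1$, so $1 \le \ell \le n_0$ of the $n_0$ come from layer $k$. The point $v$ lies in the relative interior of a unique cell $G$ of $\mathcal{C}(F_{(k-1)})$, which is a face of $C$, and the layer-$(\le k-1)$ bent hyperplanes through $v$ are precisely those containing $G$, since an earlier bent hyperplane cannot meet the interior of a cell of $\mathcal{C}(F_{(k-1)})$ without containing it. Applying Lemma \ref{lem:countzeros} to $\mathcal{C}(F_{(k-1)})$ then forces $\dim G = \ell$ and shows there are exactly $n_0 - \ell$ such earlier bent hyperplanes, whose equations $A_{ij}=0$ cut out the affine span of $G$. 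Because the first layer has at least $n_0$ units and $F$ is generic, the complex is pointed (\cite{glmsecond}, Corollary 5.29), so $G$ contains a vertex of $C$; this yields the stated selection criterion for the $n_0-\ell$ earlier equations. Conversely, any system formed in this way is independent by genericity and supertransversality, exactly as in the proof of Lemma \ref{lem:countzeros}, so it has a unique solution.

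For part (2), I would characterize $\overline{C}$ by sign conditions and invoke injectivity. Let $x$ solve the system of part (1); by construction $s_{ij}(x)=0$ on exactly the chosen earlier bent hyperplanes, which are all the layer-$(\le k-1)$ bent hyperplanes containing $G$. If $s_{ij}(x)=s_{ij}(C)$ for every remaining pair with $i \le k-1$, then $x$ has the same $\mathcal{C}(F_{(k-1)})$-sign sequence as $G$: on the chosen coordinates both are $0$, and on the remaining coordinates $s_{ij}(C)=s_{ij}(G)$, since $G \le C$ and no remaining bent hyperplane contains $G$. By Theorem \ref{thm:injective} for the network $F_{(k-1)}$, this places $x$ in $G \subseteq \overline{C}$; together with the $\ell$ conditions $A_{km}(x)=0$, the point $x$ then lies on exactly $n_0$ independent bent hyperplanes of $\mathcal{C}(F_{(k)})$, making it a new $0$-cell in $\overline{C}$. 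For the reverse implication, any new $0$-cell $v \in \overline{C}$ lies in the relative interior of its face $G$ as in part (1), so $s_{ij}(v)=s_{ij}(G)=s_{ij}(C)$ on all remaining coordinates and the sign-matching condition holds.

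The main obstacle I expect is the bookkeeping that pins down the unique face $G$ whose relative interior contains $v$ and the proof that the layer-$(\le k-1)$ bent hyperplanes through $v$ are exactly those containing $G$; this is where the fact that the subdivision does not cut through cell interiors must be combined with the two applications of Lemma \ref{lem:countzeros} to reconcile the dimension of $G$ with the count $\ell$ of new bent hyperplanes. Once this is secured, the equivalence in part (2) reduces to the identity $s_{ij}(G)=s_{ij}(C)$ on the remaining coordinates, immediate from $G \le C$, and to the independence of the $n_0$ equations, supplied by genericity and supertransversality.
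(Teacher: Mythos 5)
Your affineness argument and part (1) are essentially correct and follow the same route as the paper (the paper extracts the ``exactly $n_0$ node maps vanish at $v$'' count from Theorem \ref{thm:cubicalcomplex} and phrases the selection criterion as nonemptiness of $C \cap \bigcap\{x : A_{ij}(x)=0\}$ rather than via your face $G$, but the content is the same). The genuine gap is in part (2), at the sentence ``by construction $s_{ij}(x)=0$ on exactly the chosen earlier bent hyperplanes.'' The construction only gives you $A_{ij}(x)=0$, where $A_{ij}$ is the affine extension of $F_{ij}|_C$; the sign sequence is defined by $s_{ij}(x)=\sgn(F_{ij}(x))$, and $F_{ij}(x)=A_{ij}(x)$ is guaranteed only for $x\in \overline{C}$. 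If $x$ lies outside $\overline{C}$ --- which is exactly the case the ``if'' direction of (2) must rule out --- then $F_{ij}$ is computed on the cell containing $x$ by a \emph{different} affine map, and $s_{ij}(x)$ at a chosen coordinate can perfectly well be $\pm 1$. So your appeal to Theorem \ref{thm:injective} presupposes the zero pattern you are trying to establish, i.e.\ it presupposes $x\in\overline{C}$, and the argument is circular at its key step. This is precisely the failure mode the paper depicts in Figure \ref{fig:contradictionfigure}: a solution of the affine system sitting in the interior of a far-away cell $D$ whose remaining signs might a priori agree with those of $C$.

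The paper closes this hole with a contradiction argument that never evaluates signs at the chosen coordinates. If a solution $x$ is not a $0$-cell in $\overline{C}$, it lies in the interior of some cell $D$ of $\mathcal{C}(F_{(k-1)})$ which is not a face of $C$; supposing all remaining signs of $D$ agree with those of $C$, the selection criterion in (1) supplies a face $E$ of $C$ whose sign sequence vanishes at the chosen coordinates, and Lemma \ref{lem:faceproperties} (the relation $S(E)\cdot S(D)=S(D)$) then forces $E$ to be a proper face of $D$. But the solution set of the chosen equations $A_{ij}=0$ contains the affine span of $E$, and the affine span of a proper face meets $\overline{D}$ only inside a proper face of $D$, so $x$ cannot lie in $D^{\circ}$ --- a contradiction. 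Hence some remaining coordinate must witness $s_{ij}(x)=s_{ij}(D)\neq s_{ij}(C)$, which is the contrapositive of the claim you need. Your injectivity route through $G$ would go through if you first proved this dichotomy, but as written the product/face machinery of Lemma \ref{lem:faceproperties} (or an equivalent substitute for it) cannot be bypassed, and your stated ``main obstacle'' (the bookkeeping around $G$ in part (1)) misidentifies where the real difficulty of the lemma lies.
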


\begin{proof}
	
	For statement (1), suppose that $v$ is in the closure of $C$, where $C$ is a cell of $\mathcal{C}(F_{k-1} \circ ... \circ F_1)$, and $v$ is a vertex of $\mathcal{C}(F_k \circ ... \circ F_1)$. By Theorem \ref*{thm:cubicalcomplex}, $v$ is the solution to $F_{ij}(x)=0$ for exactly $n_0$ node maps. Since $F_{ij}|_{C}=A_{ij}$, then $A_{ij}(v)=0$ for those $n_0$ node maps. If $i<k$ for all of these node maps $F_{ij}$, then in fact $v$ is a $0$-cell of $\mathcal{C}(F_{k-1}\circ...\circ F_1)$. So if $v$ is a vertex of $\mathcal{C}(F_{k}\circ ... \circ F_1)$ and not a vertex of $\mathcal{C}(F_{k-1} \circ ... \circ F_1)$, at least one of the $F_{ij}$ must be a node map with $i=k$. Thus, any vertex of $\mathcal{C}(F_{k}\circ ... \circ F_1)$ which is contained in the closure of $C$ must be a solution to a system of equations of this form. For any solution of this form to be nonempty when intersecting with the closure of $C$, the the $A_{ij}$ corresponding to this system of equations must satisfy the condition that $C \cap \bigcap \{x: A_{ij}(x)=0\}$ is nonempty. Since none of the $A_{ij}$ from earlier layers intersect the interior of $C$, the intersection of the $A_{ij}$ are describing the linear span of a face of $C$, which must contain a vertex of $C$. 
	
	For statement (2), of course if $v$ is a solution to the system of equations described in (1) and also is in the closure of $C$, then by Lemma \ref*{lem:faceproperties}, $s_{ij}(v) = s_{ij}(C)$ when $i\leq k-1$, except for where $s_{ij}(v)=0$, which by Theorem \ref{thm:cubicalcomplex} occurs for precisely the bent hyperplanes which were intersected to obtain $s_{ij}$. 
	
	\begin{figure}[h]
		\centering 
		\includegraphics[width=0.4\linewidth]{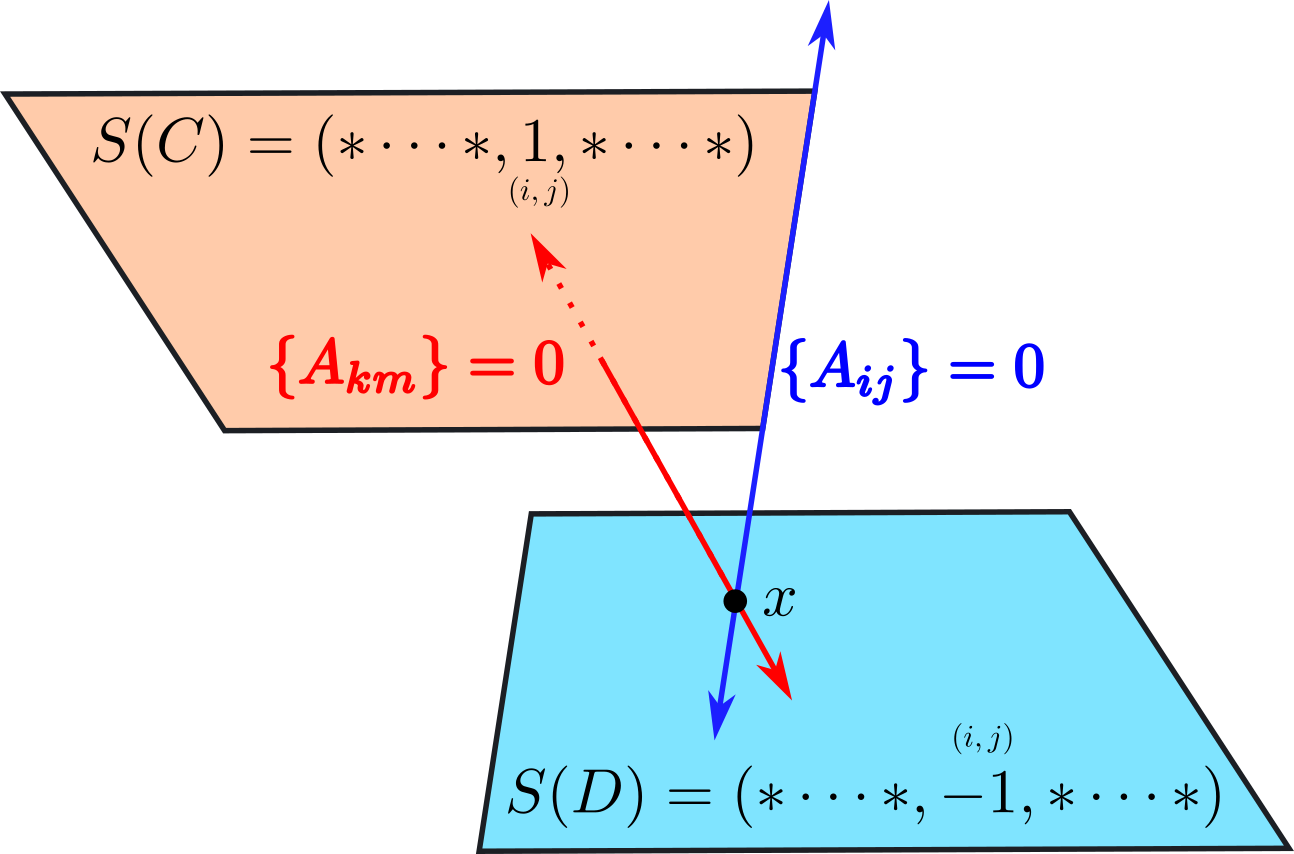}
		\caption{When determining if a solution $x$ to the system of equations in Lemma \ref{lem:laterlayervertices} is a vertex of $\mathcal{C}(F_{(k)})$, we look at its sign sequence. However, its sign relative to $\{A_{ij}\}$ is numerically unstable. We would like to guarantee it belongs to the closure of $C$  by evaluating the node maps which do not include $A_{ij}$. A concern is that it is contained in a different cell $D$ with identical signs to $C$ in $\mathcal{C}(F_{(k-1)})$ except possibly in the locations of the $A_{ij}$ which we intersected, which would make this task impossible. The argument in part (2) shows this does not occur, and the situation pictured above is impossible. } 
				\label{fig:contradictionfigure}
	\end{figure}
	
	In the other direction, if $x$ is a solution to the above system of equations but is not a $0$-cell of $\mathcal{C}(F)$, it must not be contained in the closure of $C$.  Then $x$ is contained in the interior of some other cell of $\mathcal{C}(F_{k-1}\circ ... \circ F_1)$, call it $D$, such that $D$ is not a face of $C$ (Figure \ref{fig:contradictionfigure}). If there is some bent hyperplane corresponding to one of the remaining $(i,j)$  pairs such that $s_{ij}(D)\neq s_{ij}(C)$ then we are done. Otherwise we will see a contradiction. If $S(D)=S(C)$ except at $(i,j)$ pairs corresponding to some of the $A_{ij}$, then by our selection of equations earlier, there is a face $E$ of $C$ which has the sign sequence equal to zero at these coordinates (contained in the intersection of the solution of $A_{ij}x=0$). If $E=D$, then $D$ is a face of $C$ and we have a contradiction. The only other option is that $E$ is a proper face of $D$ by Lemma \ref{lem:faceproperties}. The intersection of the solutions to $A_{ij}=0$ contains the affine span of $E$, so the intersection of these with the closure of $D$ is contained in a proper face of $D$, and so $x$, an element of this intersection, cannot be in the interior of $D$. This contradicts our assumption that $x$ is in the interior of $D$.
	
	This shows that if $x$ is a solution to the above system of equations but is not a $0$-cell of $\mathcal{C}(F)$, then there exists some $(i,j)$ pair with $i\leq k-1$ such that $s_{ij}(x)\neq s_{ij}(C)$ and which does not correspond to the hyperplanes which were intersected.

\end{proof}

\section{License Information}
\label{a:license}

PyTorch \cite{pytorch} is under a Modified BSD license, permitting use in other projects and requiring its licensing information repackaged when its source code is redistributed. We do not redistribute its source code in our work. 

Sage \cite{sage} is licensed under the GNU General Public License (GPL). It is free to use and distribute. We do not redistribute its source code in our work, but it is necessary to run the decision boundary topology computations.

\end{document}